\def\num_part{\#\pi}
\def\N_srg{N^{\num_part}}
\def\T_srg{T^{\num_part}}
\newenvironment{keywords}{\centerline{\bf\small
		Keywords}\begin{quote}\small}{\par\end{quote}\vskip 1ex}
\newenvironment{proof}[1][Proof]{\begin{trivlist}
		\item[\hskip \labelsep {\bfseries #1}]}{\end{trivlist}}
\newenvironment{definition}[1][Definition]{\begin{trivlist}
		\item[\hskip \labelsep {\bfseries #1}]}{\end{trivlist}}
\newenvironment{example}[1][Example]{\begin{trivlist}
		\item[\hskip \labelsep {\bfseries #1}]}{\end{trivlist}}
\newenvironment{remark}[1][Remark]{\begin{trivlist}
		\item[\hskip \labelsep {\bfseries #1}]}{\end{trivlist}}
\newcommand{\figcaption}{\def\@captype{figure}\caption}
\newcommand{\tabcaption}{\def\@captype{table}\caption}
\newcommand{\qed}{\nobreak \ifvmode \relax \else
	\ifdim\lastskip<1.5em \hskip-\lastskip
	\hskip1.5em plus0em minus0.5em \fi \nobreak
	\vrule height0.75em width0.5em depth0.25em\fi}
\newenvironment{proof}{\par\noindent{\bf Proof\ }}{\hfill\BlackBox\\[2mm]}
\newtheorem{theorem}{Theorem}
\newtheorem{assumption}{Assumption}
\newtheorem{proposition}[theorem]{Proposition}
\newtheorem{remark}{Remark}
\newcommand{\RN}[1]{%
	\textup{\lowercase\expandafter{\it \romannumeral#1}}%
}
\def\KL{\textsf{KL}} 
\def\sig{\textsf{sigmoid}}
\begin{document}
%
\title{Variance Reduction in Stochastic Particle-Optimization Sampling}
\author{
	{\bf Jianyi Zhang}, \\
	jianyi1zh@gmail.com \\
	School of Mathematical Sciences, Fudan University\\
	\and
	{\bf Yang Zhao} \\
	yzhao63@buffalo.edu \\
	SUNY at Buffalo \\
	\and
	{\bf Changyou Chen}, \\
	cchangyou@gmail.com \\
	SUNY at Buffalo\\
}

\maketitle

\begin{abstract} 
	Stochastic particle-optimization sampling (SPOS) is a recently-developed scalable Bayesian sampling framework that unifies stochastic gradient MCMC (SG-MCMC) and Stein variational gradient descent (SVGD) algorithms based on Wasserstein gradient flows. With a rigorous non-asymptotic convergence theory developed recently, SPOS avoids the particle-collapsing pitfall of SVGD. Nevertheless, variance reduction in SPOS has never been studied. In this paper, we bridge the gap by presenting several variance-reduction techniques for SPOS. Specifically, we propose three variants of variance-reduced SPOS, called SAGA particle-optimization sampling (SAGA-POS), SVRG particle-optimization sampling (SVRG-POS) and a variant of SVRG-POS which avoids full gradient computations, denoted as SVRG-POS$^+$. Importantly, we provide non-asymptotic convergence guarantees for these algorithms in terms of 2-Wasserstein metric and analyze their complexities. Remarkably, the results show our algorithms yield better convergence rates than existing variance-reduced variants of stochastic Langevin dynamics, even though more space is required to store the particles in training. Our theory well aligns with experimental results on both synthetic and real datasets.
\end{abstract} 

\begin{keywords}
	Stochastic particle-optimization sampling;
	Variance Reduction;
	Non-asymptotic convergence guarantees;
	2-Wasserstein metric;
	Complexities;
\end{keywords}

\newpage

\tableofcontents

\section{Introduction}

Sampling has been an effective tool for approximate Bayesian inference, which becomes increasingly important in modern machine learning. In the setting of big data, recent research has developed scalable Bayesian sampling algorithms such as stochastic gradient Markov Chain Monte Carlo (SG-MCMC) \cite{WellingT:ICML11} and Stein variational gradient descent (SVGD) \cite{liu2016stein}. These methods have facilitated important real-world applications and achieved impressive results, such as topic modeling \cite{GanCHCC:icml15,LiuZS:NIPS16}, matrix factorization \cite{ChenFG:ICML14,DingFBCSN:NIPS14,SBCG:ICML16}, differential privacy \cite{WangFS:icml15,LiCLC:arxiv17}, Bayesian optimization \cite{SpringenbergKFH:NIPS16} and deep neural networks \cite{PSGLD:AAAI16}. Generally speaking, these methods use gradient information of a target distribution to generate samples, leading to more effective algorithms compared to traditional sampling methods. Recently, \cite{ChenZWLC:UAI18} proposed a particle-optimization Bayesian sampling framework based on Wasserstein gradient flows, which unified SG-MCMC and SVGD in a new sampling framework called particle-optimization sampling (POS). Very recently, \cite{ZhangZC:18} discovered that SVGD endows some unintended pitfall, {\it i.e.} particles tend to collapse under some conditions. As a result, a remedy was proposed to inject random noise into SVGD update equations in the POS framework, leading to stochastic particle-optimization sampling (SPOS) algorithms \cite{ZhangZC:18}. Remarkably, for the first time, non-asymptotic convergence theory was developed for SPOS (SVGD-type algorithms) in \cite{ZhangZC:18}.

In another aspect, in order to deal with large-scale datasets, many gradient-based methods for optimization and sampling use stochastic gradients calculated on a mini-batch of a dataset for computational feasibility. Unfortunately, extra variance is introduced into the algorithms, which would potentially degrade their performance. Consequently, variance control has been an important and interesting work for research. Some efficient solutions such as SAGA \cite{SAGA:DA2014} and SVRG \cite{RTASGD:AG2013} were proposed to reduce variance in stochastic optimization. Subsequently, \cite{DubeyRPSX:nips16} introduced these techniques in SG-MCMC for Bayesian sampling, which also has achieved great success in practice.

Since SPOS has enjoyed the best of both worlds by combining SG-MCMC and SVGD, it will be of greater value to further reduce its gradient variance. While both algorithm and theory have been developed for SPOS, no work has been done to investigate its variance-reduction techniques. Compared with the research on SG-MCMC where variance reduction has been well explored by recent work such as \cite{DubeyRPSX:nips16,Nilardri:2018,DIFAN:2018}, it is much more challenging for SPOS to control the variance of stochastic gradients. This is because from a theoretical perspective, SPOS corresponds to nonlinear stochastic differential equations (SDE), where fewer existing mathematical tools can be applied for theoretical analysis. Furthermore, the fact that many particles are used in an algorithm makes it difficult to improve its performance by adding modifications to the way they interact with each other.

In this paper, we take the first attempt to study variance-reduction techniques in SPOS and develop corresponding convergence theory. We adopt recent ideas on variance reduction in SG-MCMC and stochastic-optimization algorithms, and propose three variance-reduced SPOS algorithms, denoted as SAGA particle-optimization sampling (SAGA-POS), SVRG particle-optimization sampling (SVRG-POS) and a variant of SVRG-POS without full-gradient computations, denoted as SVRG-POS$^+$. For all these variants, we prove rigorous theoretical results on their non-asymptotic convergence rates in terms of 2-Wasserstein metrics. Importantly, our theoretical results demonstrate significant improvements of convergence rates over standard SPOS. Remarkably, when comparing our convergence rates with those of variance-reduced stochastic gradient Langevin dynamics (SGLD), our theory indicates faster convergence rates of variance-reduced SPOS when the number of particles is large enough. Our theoretical results are verified by a number of experiments on both synthetic and real datasets.

\section{Preliminaries}

\subsection{Stochastic gradient MCMC}

In Bayesian sampling, one aims at sampling from a posterior distribution $p(\thetab|\Xb)\propto p(\Xb|\thetab)p(\thetab)$, where $\thetab\in\mathbb{R}^d$ represents the model parameter, and $\Xb\triangleq \{\xb_j\}_{j=1}^N$ is the dataset. Let $p(\thetab|\Xb)=(1/Z)\exp(-U(\thetab))$, where 
\vspace{-0.1cm}
{\begin{align*}
	U(\thetab)=-\log p(\Xb|\thetab)- \log p(\thetab)
	\triangleq -\sum_{j=1}^N\log p(\xb_i | \thetab) - \log p(\thetab)
	\end{align*}\par \vspace{-0.2cm}}
\noindent is referred to as the potential energy function, and $Z$ is the normalizing constant. We further define the full gradient $F$ and individual gradient $F_j$ used in our paper:
\vspace{-0.2cm}
{\begin{align*}
	F_j(\thetab)\triangleq & -\nabla_{\thetab}\log p(\xb_j | \thetab) - \frac{1}{N}\nabla_{\thetab}\log p(\thetab)=\frac{1}{N}\nabla_{\thetab}U(\thetab)\\
	F(\thetab)\triangleq  &   \nabla_{\thetab}U(\thetab)=\sum_{j=1}^{N}F_j(\thetab)
	\end{align*}
	We can define a stochastic differential equation, an instance of It\'{o} diffusion whose stationary distribution equals to the target posterior distribution $p(\thetab|\Xb)$. For example, consider the following 1st-order Langevin dynamic:
	\begin{align}\label{eq:itodif}
	\mathrm{d}\thetab_t = -\beta^{-1}F(\thetab_t)\mathrm{d}t + \sqrt{2\beta^{-1}}\mathrm{d}\mathcal{W}_t~,
	\end{align}
	where $t$ is the time index; $\mathcal{W}_t \in \mathbb{R}^{d}$ is $d$-dimensional Brownian motion, and $\beta$ a scaling factor. By the Fokker-Planck equation \cite{Kolmogoroff:MA31,Risken:FPE89}, the stationary distribution of \eqref{eq:itodif} equals to $p(\thetab|\Xb)$.
	
	SG-MCMC algorithms are discretized numerical approximations of It\'{o} diffusions \eqref{eq:itodif}.
	To make algorithms efficient in a big-data setting, the computationally-expensive term $F$ is replaced with its unbiased stochastic approximations with a random subset of the dataset in each interation, {\it e.g.} $F$ can be approximated by a stochastic gradient:
	{\small
		\begin{align*}
		\vspace{-0.3cm}
		G_k \triangleq \frac{N}{B} \sum_{j\in I_k}F_j(\theta_k) =-\nabla\log p(\theta_k)- \frac{N}{B}\sum_{j\in I_k} \nabla_{\theta_k}\log p(\xb_{j}|\theta_k)
		\end{align*}}
	\noindent where $I_k$ is a random subset of $\{1, 2, \cdots, N\}$ with size $B$. The above definition of $G_k$ reflects the fact that we only have information from $B \ll N$ data points in each iteration. This is the resource where the variance we try to reduce comes from.
	We should notice that $G_k$ is also used in standard SVGD and SPOS.
	As an example, SGLD is a numerical solution of \eqref{eq:itodif}, with an update equation:
	$\theta_{k+1} = \theta_{k} - \beta^{-1}G_kh + \sqrt{2\beta^{-1}h}\xib_{k}$, 
	where $h$ means the step size and $\xib_{k}\sim\mathcal{N}(\mathbf{0}, \Ib)$.

	\subsection{Stein variational gradient descent}
	
	Different from SG-MCMC, SVGD initializes a set of particles, which are iteratively updated to approximate a posterior distribution. Specifically, we consider a set of particles $\{\thetab^{(i)}\}_{i=1}^M$ drawn from some distribution $q$. SVGD tries to update these particles by doing gradient descent on the interactive particle system via
	\begin{align*}
	\thetab^{(i)} \leftarrow \thetab^{(i)} + h \phi(\thetab^{(i)}),~~\phi = \arg\max_{\phi\in \mathcal{F}} \{\dfrac{\partial}{\partial h} \KL(q_{[h\phi]}||p)\}
	\end{align*}
	where $\phi$ is a function perturbation direction chosen to minimize the KL divergence between the updated density $q_{[h\phi]}$ induced by the particles and the posterior $p(\thetab|\Xb)$.
	The standard SVGD algorithm considers $\mathcal{F}$ as the unit ball of a vector-valued reproducing kernel Hilbert space (RKHS) $\mathcal{H}$ associated with a kernel $\kappa(\thetab,\thetab^{\prime})$. In such a setting, \cite{liu2016stein} shows that
	\begin{align}
	\label{equ:close}
	\phi(\thetab) = \mathbb{E}_{\thetab^\prime\sim q}[\kappa(\thetab, \thetab^\prime) F(\thetab^\prime)
	+ \nabla_{\thetab^\prime} \kappa(\thetab, \thetab^\prime)].
	\end{align}
	When approximating the expectation $\mathbb{E}_{\thetab^\prime\sim q}[\cdot]$ with an empirical distribution formed by a set of particles $\{\thetab^{(i)}\}_{i=1}^M$ and adopting stochastic gradients $G_k^{(i)}\triangleq \frac{N}{B} \sum_{j\in I_k}F_j(\theta_k^{(i)})$, we arrive at the following update for the particles: 
	{\small\begin{align}  \label{eq:svgd_update}
		\theta_{k+1}^{(i)} = \theta_{k}^{(i)} +\dfrac{h}{M} \sum_{q=1}^M \left[ \kappa(\theta_{\k}^{(q)}, \theta_{k}^{(i)}) G_k^{(i)}+ \nabla_{\theta_{k}^{(q)}} \kappa(\theta_{k}^{(q)}, \theta_{k}^{(i)}) \right]
		\end{align}}
	SVGD then applies \eqref{eq:svgd_update} repeatedly for all the particles.
	
	\subsection{Stochastic particle-optimization sampling}\label{SPOS}
	In this paper, we focus on RBF kernel $\kappa(\thetab,\thetab^{\prime})=\exp(-\frac{\| \thetab-\thetab^{\prime} \|^2}{2\eta^2})$ due to its wide use in both theoretical analysis and practical applications. Hence, we can use a function $K(\thetab)=\exp(-\frac{\| \thetab \|^2}{2\eta^2})$ to denote the kernel $\kappa(\thetab,\thetab^{\prime})$. According to the work of \cite{ChenZWLC:UAI18,ZhangZC:18}, the stationary distribution of the $\rho_t$ in the following partial differential equation equals to $p(\thetab|\Xb)$. 
	\begin{align}\label{accept}
	\partial_t \rho_t =& \nabla_{\thetab}\cdot (\rho_t\beta^{-1}F(\thetab)+\rho_tE_{Y\sim\rho_t}K(\thetab-Y)F(Y)-\rho_{t}(\nabla K*\rho_{t})+\beta^{-1}\nabla_{\thetab}\rho_{t})~.
	\end{align}
	When approximating the $\rho_t$ in Eq.(\ref{accept}) with an empirical distribution formed by a set of particles $\{\thetab^{(i)}\}_{i=1}^M$, \cite{ZhangZC:18} derive the following diffusion process characterizing the SPOS algorithm. 
	{\small\begin{align} \label{eq:particle}
		&\mathrm{d}\thetab_{t}^{(i)} =-\beta^{-1}F(\thetab_t^{(i)})\mathrm{d}t - \frac{1}{M}\sum_{q=1}^{M}K(\thetab_{t}^{(i)} - \thetab_{t}^{(q)})F(\thetab_{t}^{(q)})\mathrm{d}t \nonumber \\
		&+\frac{1}{M}\sum_{q=1}^{M}\nabla K(\thetab_{t}^{(i)} - \thetab_{t}^{(q)})\mathrm{d}t+ \sqrt{2\beta^{-1}}\mathrm{d}\mathcal{W}_t^{(i)}~~\forall i~
		\end{align}}
	It is worth noting that if we set the initial distribution of all the particles $\thetab_0^{(i)}$ to be the same, the system of these M particles is exchangeable. So the distributions of all the $\thetab_ t^{(i)}$ are identical and can be denoted as $\rho_t$. When solving the above diffusion process with a numerical method and adopting stochastic gradients $G_k^{(i)}$, one arrives at the SPOS algorithm of \cite{ZhangZC:18} with the following update equation:
	\begin{align} \label{eq:particle_num}
	&{\theta}_{k+1}^{(i)} = {\theta}_{k}^{(i)} -h\beta^{-1}G_k^{(i)}  - \frac{h}{M}\sum_{j=1}^{M}K(\theta_{k}^{(i)} - \theta_{k}^{(j)})G_k^{(j)} \nonumber \\
	& +\frac{h}{M}\sum_{j=1}^{M}\nabla K({\theta}_{k }^{(i)} - {\theta}_{k }^{(j)}) + \sqrt{2\beta^{-1}h}\xi_{k}^{(i)} 
	\end{align}
	where $\xi_{k}^{(i)}\sim\mathcal{N}(\mathbf{0}, \Ib)$. And SPOS will apply update \eqref{eq:particle_num} repeatedly for all the particles ${\theta}_{k}^{(i)}$. Detailed theoretical results for SPOS are reviewed in the Supplementary Material (SM).
	
\section{Variance Reduction in SPOS}\label{sec:alg}
In standard SPOS, each particle is updated by adopting $G_k^{(i)} \triangleq \frac{N}{B}\sum_{j\in I_k}F_j(\thetab_k^{(i)})$. Due to the fact that one can only access $B \ll N$ data points in each update, the increased variance of the ``noisy gradient'' $G_k^{(i)}$ would cause a slower convergence rate. A simple way to alleviate this is to increase $B$ by using larger minibatches. Unfortunately, this would bring more computational costs, an undesired side effect.
Thus more effective variance-reduction methods are needed for SPOS. Inspired by recent work on variance reduction in SGLD, {\it e.g.}, \cite{DubeyRPSX:nips16,Nilardri:2018,DIFAN:2018}, we develop three different variance-reduction algorithms for SPOS based on SAGA \cite{SAGA:DA2014} and SVRG \cite{RTASGD:AG2013} in stochastic optimization.
\subsection{SAGA-POS}
SAGA-POS generalizes the idea of SAGA \cite{SAGA:DA2014} to an interactive particle-optimization system. For each particle $\theta_k^{(i)}$, we use $\{g_{k,j}^{(i)}\}_{j=1}^N$ as an approximation for each individual gradient $F_j(\theta_k^{(i)})$. 
An unbiased estimate of the full gradient $F(\theta_k^{(i)})$ is calculated as:
\begin{align}\label{G}
G_k^{(i)}=\sum\limits_{j=1}^{N}g_{k,j}^{(i)}+\frac{N}{B}\sum\limits_{j\in{I_k}}(F_j(\theta_k^{(i)})-g_{k,j}^{(i)}),~\forall i~
\end{align}
In each iteration, $\{g_{k,j}^{(i)}\}_{j=1}^N$ will be partially updated under the following rule: $g_{k+1,j}^{(i)}=F_j(\theta_k^{(i)})$ if $j \in I_k $, and $g_{k+1,j}^{(i)}=g_{k,j}^{(i)}$ otherwise. 
The algorithm is described in Algorithm \ref{algo:algo1}.

Compared with standard SPOS, SAGA-POS also enjoys highly computational efficiency, as it does not require calculation of each $F_j(\theta_k^{(i)})$ to get the full gradient $F(\theta_k^{(i)})$ in each iteration. Hence, the computational time of SAGA-POS is almost the same as that of POS. However, our analysis in Section~\ref{analysis} shows that SAGA-POS endows a better convergence rate.

From another aspect, SAGA-POS has the same drawback of SAGA-based algorithms, which requires memory scaling at a rate of $\mathcal{O}(MNd)$ in the worst case. For each particle $\theta_k^{(i)}$, one needs to store N gradient approximations $\{g_{k,j}^{(i)}\}_{j=1}^N$. Fortunately, as pointed out by \cite{DubeyRPSX:nips16,Nilardri:2018}, in some applications, the memory cost scales only as $\mathcal{O}(N)$ for SAGA-LD, which corresponds to $\mathcal{O}(MN)$ for SAGA-POS as $M$ particles are used.

\begin{algorithm}\label{algo:algo1}
	\caption{SAGA-POS}
	{\bf Input:} A set of initial particles $\{\theta_0^{(i)}\}_{i=1}^{M}$, each $\theta_0^{(i)}\in{\mathbb{R}^d}$, step size $h_k$, batch size $B$.\\
	Initialize $\{g_{0,j}^{(i)}\}_{j=1}^{N}=\{F_j(\theta_0^{(i)})\}_{j=1}^{N}$ for all $i\in{\{1,...,M\}}$;
	\begin{algorithmic}[1]
		\FOR{iteration $k$= 0,1,...,T}
		\STATE Uniformly sample $I_k$ from $\{1,2,...,N\}$ randomly with replacement such that $|I_k|=B$; \\
		\STATE Sample $\xi_{k}^{(i)} \sim N($0$,I_{d \times d}),~\forall i~$;\\
		\STATE Update $G_k^{(i)}\leftarrow \sum\limits_{j=1}^{N}g_{k,j}^{(i)}+\frac{N}{B}\sum\limits_{j\in{I_k}}(F_j(\theta_k^{(i)})-g_{k,j}^{(i)}),~\forall i~$;\\
		\STATE Update each $\theta_k^{(i)}$ with Eq.(\ref{eq:particle_num});\\
		\STATE Update $\{g_{k,j}^{(i)}\}_{j=1}^N,~~\forall i~$: if $j \in{I_k}$, set $g_{k+1,j}^{(i)}\leftarrow F_j(\theta_k^{(i)})$; else, set $g_{k+1,j}^{(i)}\leftarrow g_{k,j}^{(i)}$
		\ENDFOR
	\end{algorithmic}
	{\bf Output:}{$\{\theta_T^{(i)}\}_{i=1}^M$}
\end{algorithm}

\begin{remark}\label{remark1}
	When compared with SAGA-LD, it is worth noting that $M$ particles are used in both SPOS and SAGA-POS. This makes the memory complexity $M$ times worse than SAGA-LD in training, thus SAGA-POS does not seem to bring any advantages over SAGA-LD. However, this intuition is not correct. As indicated by our theoretical results in Section~\ref{analysis}, when the number of particles $M$ is large enough, the convergence rates of our algorithms are actually better than those of variance-reduced SGLD counterparts. 
\end{remark}

\subsection{SVRG-POS}\vspace{-0.1cm}
Under limited memory, we propose SVRG-POS, which is based on the SVRG method of \cite{RTASGD:AG2013}. For each particle $\theta_k^{(i)}$, ones needs to store a stale parameter $\widetilde{\theta}^{(i)}$, and update it occasionally for every $\tau$ iterations. At each update, we need to further conduct a global evaluation of full gradients at $\widetilde{\theta}^{(i)}$,  {\it i.e.}, $\widetilde{G}^{(i)}\leftarrow F(\theta_k^{(i)})=F(\widetilde{\theta}^{(i)})$. 
An unbiased gradient estimate is then calculated by leveraging both $\widetilde{G}^{(i)}$ and $\widetilde{\theta}^{(i)}$ as:
\begin{align}
G_k^{(i)}\leftarrow \widetilde{G}^{(i)}+\frac{N}{B}\sum\limits_{j\in{I_k}}[F_j(\theta_k^{(i)})-F_j(\widetilde{\theta}^{(i)})]
\end{align}\par \vspace{-0.2cm}

\noindent The algorithm is depicted in Algorithm~\ref{algo:algo2}, where one only needs to store $\widetilde{\theta}^{(i)}$ and $\widetilde{G}^{(i)}$, instead of gradient estimates of all the individual $F_j$. Hence the memory cost scales as $\mathcal{O}(Md)$, almost the same as that of standard SPOS.

We note that although SVRG-POS alleviates the storage requirement of SAGA-POS remarkably, it also endows downside that full gradients, $F(\widetilde{\theta}^{(i)})=\sum_{j=1}^N F(\widetilde{\theta}^{(i)})$, are needed to be re-computed every $\tau$ iterations, leading to high computational cost in a big-data scenario.
\begin{algorithm}\label{algo:algo2}
	\caption{SVRG-POS}
	{\bf Input:} A set of initial particles $\{\theta_0^{(i)}\}_{i=1}^{M}$, each $\theta_0^{(i)}\in{\mathbb{R}^d}$, step size $h$, epoch length $\tau$, batch size $B$.\\
	Initialize $\{\widetilde{\theta}^{(i)}\}\leftarrow\{\theta_0^{(i)}\},\widetilde{G}^{(i)}\leftarrow F(\theta_0^{(i)}),~\forall i~$;
	\begin{algorithmic}[1]
		\FOR{iteration $k$= 0,1,...,T}
		\IF{k mod $\tau$ =0}
		\STATE{\textbf{Option \uppercase\expandafter{\romannumeral1}} $\RN{1}) $Sample $l\sim unif(0,1,..,\tau-1)$\\
			$\RN{2}) $Update $\widetilde{\theta}^{(i)} \leftarrow \theta_{k-l}^{(i)}$\\
			Update $\theta_{k}^{(i)} \leftarrow \widetilde{\theta}^{(i)},~\forall i~$\\
			$\RN{3}) $Update $\widetilde{G}^{(i)}\leftarrow F(\theta_k^{(i)}),~\forall i~$
			\\
			\STATE\textbf{Option \uppercase\expandafter{\romannumeral2}} $\RN{1})$ Update $\widetilde{\theta}^{(i)} \leftarrow \theta_k^{(i)}$\\
			$\RN{2})$Update $\widetilde{G}^{(i)}\leftarrow F(\theta_k^{(i)}),~\forall i~$\\
		}\ENDIF
		\STATE Uniformly sample $I_k$ from $\{1,2,...,N\}$ randomly with replacement such that $|I_k|=B$; \\
		\STATE Sample $\xi_{k}^{(i)} \sim N($0$,I_{d \times d}),~\forall i~$;\\
		\STATE Update $G_k^{(i)}\leftarrow \widetilde{G}^{(i)}+\frac{N}{B}\sum\limits_{j\in{I_k}}[F_j(\theta_k^{(i)})-F_j(\widetilde{\theta}^{(i)})],~\forall i~$;\\
		\STATE Update each $\theta_k^{(i)}$ with  Eq.(\ref{eq:particle_num})
		\ENDFOR
	\end{algorithmic}
	{\bf Output:}{$\{\theta_T^{(i)}\}_{i=1}^M$}
\end{algorithm}
\begin{remark}\label{remark2}
	$\RN{1})$ Similar to SAGA-POS, according to our theory in Section~\ref{analysis}, SVRG-POS enjoys a faster convergence rate than SVRD-LD -- its SGLD counterpart, although $M$ times more space are required for the particles. This provides a trade-off between convergence rates and space complexity. 
	$\RN{2})$ Previous work has shown that SAGA typically outperforms SVRG \cite{DubeyRPSX:nips16,Nilardri:2018} in terms of convergence speed. The conclusion applies to our case, which will be verified both by theoretical analysis in Section~\ref{analysis} and experiments in Section~\ref{sec:exp}.
\end{remark}

\subsection{SVRG-POS$^+$}

The need of full gradient computation in SVRG-POS motives the development of SVRG-POS$^+$. Our algorithm is also inspired by the recent work of SVRG-LD$^{+}$ on reducing the computational cost in SVRG-LD \cite{DIFAN:2018}. 
The main idea in SVRG-POS$^+$ is to replace the full gradient computation every $\tau$ iterations with a subsampled gradient, {\it i.e.}, to uniformly sample $|J_k| = b$ data points where $J_k$ are random samples from $\{1,2,...,N\}$ with replacement. Given the sub-sampled data, $\widetilde{\theta}^{(i)}$ and $\widetilde{G}^{(i)}$ are updated as:
$\widetilde{\theta}^{(i)}=\theta_k^{(i)},~~
\widetilde{G}^{(i)}=\frac{N}{b}\sum_{j\in J_k}F_j(\theta_k^{(i)})$. 
The full algorithm is shown in Algorithm~\ref{algo:algo3}.

\begin{algorithm}\label{algo:algo3}
	\caption{SVRG-POS$^+$}
	{\bf Input : } A set of initial particles $\{\theta_0^{(i)}\}_{i=1}^{M}$, each $\theta_0^{(i)}\in{\mathbb{R}^d}$, step size $h$, epoch length $\tau$, batch size $B$.\\
	Initialize $\{\widetilde{\theta}^{(i)}\}\leftarrow\{\theta_0^{(i)}\},\widetilde{G}^{(i)}\leftarrow F(\theta_0^{(i)}),~\forall i~$;
	\begin{algorithmic}[1]
		\FOR{iteration $k$= 0,1,...,T}
		\IF{k mod $\tau$ =0}
		\STATE$\RN{1})$ Uniformly sample $J_k$ from $\{1,2,...,N\}$ with replacement such that $|J_k|=b$;\\
		$\RN{2})$ Update $\widetilde{\theta}^{(i)} \leftarrow \theta_k^{(i)}$\;
		$\widetilde{G}^{(i)}\leftarrow \frac{N}{b}\sum_{j\in J_k}F_j(\theta_k^{(i)}),~\forall i~$;
		\ENDIF
		\STATE Uniformly sample $I_k$ from $\{1,2,...,N\}$ with replacement such that $|I_k|=B$; \\
		\STATE Sample $\xi_{k}^{(i)} \sim N($0$,I_{d \times d}),~\forall i~$;\\
		\STATE Update $G_k^{(i)}\leftarrow \widetilde{G}^{(i)}+\frac{N}{B}\sum\limits_{j\in{I_k}}[F_j(\theta_k^{(i)})-F_j(\widetilde{\theta}^{(i)})],~\forall i~$;\\
		\STATE Update each $\theta_k^{(i)}$ with  Eq.(\ref{eq:particle_num})
		\ENDFOR
	\end{algorithmic}
	{\bf Output:}{$\{\theta_T^{(i)}\}_{i=1}^M$}
\end{algorithm}

\section{Convergence Analysis}\label{analysis}
In this section, we prove non-asymptotic convergence rates for the SAGA-POS, SVRG-POS and SVRG-POS$^+$ algorithms under the 2-Wasserstein metric, defined as
\begin{align*}
\mathcal{W}_2(\mu,\nu)=\left(\inf\limits_{\zeta \in \Gamma(\mu,\nu)}\int _{\mathbb{R}^d\times\mathbb{R}^d} \|X_{\mu}-X_{\nu}\|^2 d\zeta(X_{\mu},X_{\nu}) \right)^{\frac{1}{2}}
\end{align*}
where $\Gamma(\mu,\nu)$ is the set of joint distributions on $\mathbb{R}^d\times\mathbb{R}^d$ with marginal distribution $\mu$ and $\nu$. 
Let $\mu^*$ denote our target distribution, and $\mu_T$ the distribution of $\frac{1}{M}\sum_{i=1}^{M}\theta_T^{(i)}$ derived via \eqref{eq:particle} after $T$ iterations. Our analysis aims at bounding $\mathcal{W}_2(\mu_T,\mu^{*})$. We first introduce our assumptions.

\begin{assumption}\label{ass:ass1}
	$F$ and $K$ satisfy the following conditions:
	\begin{itemize}		
		\item There exist two positive constants $m_F$ and $m_W$, such that $\langle F(\thetab)-F(\thetab^\prime), \thetab-\thetab^\prime \rangle \geq m_F\|\thetab-\thetab^\prime\|^2$ and $\langle \nabla K(\thetab)-\nabla K(\thetab^\prime), \thetab-\thetab^\prime \rangle \leq -m_K\|\thetab-\thetab^\prime\|^2$.
		\item $F$ is bounded and $L_F$-Lipschitz continuous with $L_F$ {\it i.e.} $\|F(\thetab)\| \leq H_F  $ and $\|F(\thetab) - F(\thetab^{\prime})\| \leq L_F\|\thetab - \thetab^{\prime}\|$; $K$ is $L_K$-Lipschitz continuous for some $L_{K} \geq 0$ and bounded by some constant $H_K >0 $.
		\item $K$ is an even function, {\it i.e.}, $K(-\thetab) = K(\thetab)$.
	\end{itemize}
\end{assumption}

\begin{assumption}\label{ass:ass2}
	There exists a constant $D_F > 0$ such that $\|\nabla F(\thetab)-\nabla F(\thetab^{\prime})\|\leq D_F\|\thetab-\thetab^{\prime}\|$.
\end{assumption}
\begin{assumption}\label{ass:ass3}
	There exits a constant $\sigma$ such that for all $j\in \{1,2,...,N\}$,
	\vspace{-0.3cm}
	\begin{align*}
	\mathbb{E}[\|F_j(\thetab)-\frac{1}{N}\sum \limits_{j=1}^{N}F_j(\thetab)\|^2]\leq d \sigma^2/N^2
	\end{align*}
	\vspace{-0.3cm}
\end{assumption}
\begin{remark}
	$\RN{1}) $ Assumption \ref{ass:ass1} is adopted from \cite{ZhangZC:18} which analyzes the convergence property of SPOS. The first bullet of Assumption \ref{ass:ass1} suggests $U(\cdot)$ is a strongly convex function, which is the general assumption in analyzing SGLD \cite{Dalalyan:2017,Durmus:2016sgld} and its variance-reduced variants \cite{DIFAN:2018,Nilardri:2018}. It is worth noting that although some work has been done to investigate the non-convex case, it still has significant value to analysis the convex case, which are more instructive and meaningful to address the practical issues \cite{Dalalyan:2017,Durmus:2016sgld,DIFAN:2018,Nilardri:2018}. $\RN{2})$ All of the $m_f$, $L_F$, $H_F$ and $D_F$ could scale linearly with $N$. $\RN{3})$ $K( \thetab )=\exp(-\frac{\| \thetab \|^2}{2\eta^2})$ can satisfy the above assumptions by setting the bandwidth large enough, since we mainly focus on some bounded space in practice. Consequently, $\nabla K$ can also be $L_{\nabla K}$-Lipschitz continuous and bounded by $H_{\nabla K}$; K can also be Hessian Lipschitz with some positive constant $D_{\nabla^2K}$
	
\end{remark}
For the sake of clarity, we define some constants which will be used in our theorems.
\begin{align*}
&C_1=\frac{H_{\nabla K}+H_F}{\sqrt{2}(\beta^{-1}-3H_FL_K-2L_F)}\\
&C_2=\sqrt{2(\beta^{-1}L_F+2L_K H_F+H_K L_F+L_{\nabla K})^2+2}\\
&C_3=\beta^{-1}m_F-2L_F-3H_FL_K\\
&C_4=\beta^{-1}D_F+4D_{\nabla^2K}+4H_FL_{\nabla K}+2L_FH_{\nabla K}\nonumber \\
&~~~~~+2H_FL_{K}+L_FH_{K}\\
&C_5=2\beta^{-1}\sigma^2+2H_K^2\sigma^2
\end{align*}
Now we present convergence analysis for our algorithms, where $\alpha$ is some positive constant independent of T. 

\begin{theorem} \label{thm:thm1}
	Let  $\mu_T$ denote the distribution of the particles after $T$ iterations with SAGA-POS. Under Assumption \ref{ass:ass1} and \ref{ass:ass2}, let the step size $h < \frac{B}{8C_2N}$ and the batch size $B \geq 9$, the convergence rate of SAGA-POS is bounded as
	\begin{align}
	\mathcal{W}_2(\mu_T,&\mu^*)\leq\frac{C_1}{\sqrt{M}}+5 \exp(-\frac{C_3 h}{4}T) \mathcal{W}_2(\mu_0,\mu^*) \nonumber \\
	&+\frac{2hC_4dM^{1/2-\alpha}}{C_3}+\frac{2h{C_2}^\frac{3}{2}\sqrt{d}}{C_3M^{\alpha}}+\frac{24hC_2\sqrt{dN}}{M^{\alpha}\sqrt{C_3}B}
	\end{align}
\end{theorem}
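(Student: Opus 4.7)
The plan is to bound $\mathcal{W}_2(\mu_T, \mu^*)$ by a three-way triangle-inequality decomposition, using the particle-system diffusion (\ref{eq:particle}) as an intermediate object. Specifically, letting $\nu_T$ denote the distribution of $\frac{1}{M}\sum_i \thetab_T^{(i)}$ where $\{\thetab_T^{(i)}\}$ is the continuous-time system (\ref{eq:particle}) run to time $T h$, I would write
\begin{align*}
\mathcal{W}_2(\mu_T,\mu^*) \leq \mathcal{W}_2(\mu_T,\nu_T) + \mathcal{W}_2(\nu_T,\mu^*).
\end{align*}
The second term is bounded by invoking the SPOS convergence theory from \cite{ZhangZC:18} (reviewed in the SM), which yields the $C_1/\sqrt{M}$ particle-approximation term and the exponential contraction factor $\exp(-C_3 h T / 4)$ toward the mean-field stationary distribution.

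The main work is bounding the discretization/stochastic-gradient error $\mathcal{W}_2(\mu_T,\nu_T)$. I would use synchronous coupling: run the discrete SAGA-POS iterates and the continuous particle diffusion with the same Brownian motion $\mathcal{W}_t^{(i)}$ and the same initial points, so that bounding $\mathcal{W}_2$ reduces to bounding $\mathbb{E}\|\thetab_{kh}^{(i)} - \theta_k^{(i)}\|^2$. For each iteration one splits the per-step error into (i) a drift-discretization part of order $h^2 D_F d$ (handled by Assumption \ref{ass:ass2} and a Taylor expansion along the SDE trajectory, yielding the $C_4$-dependent terms), (ii) an interaction-kernel discretization part controlled by the Lipschitz/Hessian properties of $K$ (these produce the $C_2$-dependent contributions and the $\sqrt{d}/M^\alpha$ factors, via a mean-field symmetry argument applied to the pairs $(\theta^{(i)},\theta^{(j)})$), and (iii) the stochastic-gradient variance at $\theta_k^{(i)}$.

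The crucial ingredient is the SAGA variance bound. I would construct a joint Lyapunov potential that combines the coupling error with the gradient-memory error,
\begin{align*}
\Phi_k = \sum_{i=1}^M \mathbb{E}\|\thetab_{kh}^{(i)}-\theta_k^{(i)}\|^2 + c\, h \sum_{i=1}^M \sum_{j=1}^N \mathbb{E}\|g_{k,j}^{(i)} - F_j(\theta_k^{(i)})\|^2,
\end{align*}
for a carefully chosen constant $c$. The SAGA update rule guarantees that the $g_{k,j}^{(i)}$-component contracts geometrically with rate $(1-B/N)$ (the probability an index is not resampled), while introducing a term proportional to the fresh squared-gradient size. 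Combined with the per-step displacement of $\theta_k^{(i)}$ this produces a recursion of the form $\Phi_{k+1} \leq (1-C_3 h/2)\Phi_k + h^2 (\text{drift terms}) + h\cdot\frac{dN}{B}\cdot(\text{gradient variance})$, where the step-size condition $h < B/(8 C_2 N)$ and $B \geq 9$ ensure both the SAGA contraction and the discretization-error terms can be absorbed. Assumption \ref{ass:ass3} bounds the worst-case summand, yielding the $\sqrt{dN}/B$ factor in the final variance term. Solving the recursion over $T$ steps and combining with the mean-field bound gives the stated inequality.

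The hard part will be step (iii): coupling the particle interaction through $K$ with the SAGA memory variables. Unlike SAGA-LD, the gradient at particle $i$ is mixed through the kernel with the (stochastic, SAGA-corrected) gradients at all other particles $j$, so the cross-particle noise terms need to be controlled simultaneously—this is what forces the symmetric multi-particle Lyapunov structure above and what introduces the $M^{-\alpha}$ factors (via a bias/variance split in the empirical kernel average). Making the step-size constraint and the choice of $\alpha$ consistent so that every error contribution collapses to the stated form is the main bookkeeping challenge.
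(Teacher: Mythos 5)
Your overall decomposition (triangle inequality through the continuous interacting-particle system, $C_1/\sqrt{M}$ from the mean-field theory of \cite{ZhangZC:18}, then a coupling/Lyapunov analysis of the discretized, variance-reduced dynamics) is in the right spirit, and it differs from the paper's route: the paper does not redo any SAGA analysis at all. It concatenates the $M$ particles into a single vector $\Thetab\in\mathbb{R}^{Md}$, shows the concatenated drift $F^{\Thetab}$ is sum-decomposable, $C_2$-smooth, $C_3$-strongly convex and $C_4$-Hessian-Lipschitz (this is where Assumptions \ref{ass:ass1}--\ref{ass:ass2} and all the kernel bounds enter), observes that SAGA-POS is \emph{exactly} SAGA-LD run in $\mathbb{R}^{Md}$, and then invokes the SAGA-LD bound of \cite{Nilardri:2018} verbatim — which is also where the specific constants ($5$, $h<B/(8C_2N)$, $B\geq 9$, the $\exp(-C_3hT/4)$ rate) come from. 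Your plan would have to re-derive all of that machinery by hand in the interacting setting, which is possible in principle but much heavier, and it is unlikely to land on these exact constants.

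The genuine gap, however, is your claim that the $M^{-\alpha}$ and $M^{1/2-\alpha}$ factors in the discretization and gradient-variance terms will come out of ``a bias/variance split in the empirical kernel average'' and a mean-field symmetry argument. No such derivation exists in the paper, and there is good reason to doubt your Lyapunov recursion can produce it: exchangeability and kernel averaging give the $O(1/\sqrt{M})$ propagation-of-chaos term ($C_1/\sqrt{M}$), but each particle carries its own independent stochastic-gradient noise and its own Euler-discretization error, so the per-step error terms in your recursion scale with $d$ per particle and do not shrink with $M$. What the paper actually does is prove only $\mathcal{W}_2(\mu_T,\nu_\infty)\leq M^{-1/2}\,\mathcal{W}_2(\Lambda_T,\Gamma_\infty)$ (Proposition \ref{pro:pro1}), and then \emph{assumes} the stronger transfer inequality $\mathcal{W}_2(\mu_T,\nu_\infty)\leq M^{-(1/2+\alpha)}\mathcal{W}_2(\Lambda_T,\Gamma_\infty)$, justified only heuristically in the appendix; dividing the $Md$-dimensional SAGA-LD bound (whose terms scale like $Md$ and $\sqrt{Md}$) by $M^{1/2+\alpha}$ is precisely what yields the $M^{1/2-\alpha}$ and $M^{-\alpha}$ factors in Theorem \ref{thm:thm1}. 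Your proposal neither introduces this assumption nor supplies a proof of an equivalent statement, so as written your argument would at best deliver the bound with no $M^{-\alpha}$ gain in the last three terms, not the theorem as stated. To close this you would either have to adopt the same extra assumption explicitly, or genuinely prove a cross-particle cancellation in the coupling error — which is an open point the paper itself sidesteps.
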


\begin{theorem} \label{thm:thm2}
	Let $\mu_T$ denote the distribution of the particles after $T$ iterations with SVRG-POS in Algorithm~\ref{algo:algo2}. 
	Under Assumption  \ref{ass:ass1} and \ref{ass:ass2}, if we choose Option $\uppercase\expandafter{\romannumeral1}$ and set the step size $h < \frac{1}{8C_2}$, the batch size $B \geq 2$ and the epoch length $\tau = \frac{4}{hC_3(1-2hC_2(1+2/B)) }$, the convergence rate of SVRG-POS is bounded for all T, which mod $\tau$ = 0, as 
	{\small\begin{align}
		\mathcal{W}_2(\mu_T,\mu^*)&\leq \frac{C_1}{\sqrt{M}}+ \exp(-\frac{C_3h}{56}T)\frac{\sqrt{C_2}}{\sqrt{C_3}} \mathcal{W}_2(\mu_0,\mu^*)\nonumber \\
		&+\frac{2hC_4dM^{1/2-\alpha}}{C_3}+\frac{2h{C_2}^\frac{3}{2}\sqrt{d}}{C_3M^{\alpha}}+\frac{64{C_2}^{\frac{3}{2}}\sqrt{hd}}{M^{\alpha}\sqrt{B}C_3}
		\end{align}}
	If we choose Option $\uppercase\expandafter{\romannumeral2}$ and set the step size $h < \frac{\sqrt{B}}{4\tau C_2}$, the convergence rate of SVRG-POS is bounded for all $T$ as 
	\begin{align}
	\mathcal{W}_2(\mu_T,\mu^*)&\leq \frac{C_1}{\sqrt{M}}+ \exp(-\frac{C_3h}{4}T)\mathcal{W}_2(\mu_0,\mu^*)\nonumber \\+
	&\frac{\sqrt{2}hC_4dM^{1/2-\alpha}}{C_3}+\frac{5h{C_2}^\frac{3}{2}\sqrt{d}}{C_3M^{\alpha}}+\frac{9hC_2\tau\sqrt{d}}{M^{\alpha}\sqrt{BC_3}}
	\end{align}
	
\end{theorem}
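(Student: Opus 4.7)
The plan is to mirror the structure of the SAGA-POS proof (Theorem \ref{thm:thm1}) while substituting in the SVRG-specific gradient oracle. First I would split the Wasserstein error via the triangle inequality into three pieces: (a) the bias $\mathcal{W}_2(\bar\nu_\infty,\mu^*)\leq C_1/\sqrt{M}$ between the target $\mu^*$ and the stationary marginal $\bar\nu_\infty$ of the $M$-particle continuous system \eqref{eq:particle}, (b) the contraction error $\mathcal{W}_2(\bar\nu_T,\bar\nu_\infty)$ of the continuous dynamics, which by the strong-convexity/negative-definiteness part of Assumption \ref{ass:ass1} decays like $\exp(-C_3hT/4)\,\mathcal{W}_2(\mu_0,\mu^*)$, and (c) the discretization-plus-stochastic-gradient error $\mathcal{W}_2(\mu_T,\bar\nu_T)$. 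Pieces (a)--(b) are already delivered by the SPOS results reviewed in the supplementary material; the new work is entirely in (c).

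For (c) I would set up a synchronous coupling between the discrete update \eqref{eq:particle_num} and the continuous process \eqref{eq:particle}, sharing Brownian motions and initialization. Expanding $\mathbb{E}\|\theta_{k+1}^{(i)}-\thetab_{(k+1)h}^{(i)}\|^2$ and applying Assumptions \ref{ass:ass1}--\ref{ass:ass2} in the usual way yields a one-step recursion of the schematic form
\begin{align*}
\mathbb{E}\|\theta_{k+1}^{(i)}-\thetab_{(k+1)h}^{(i)}\|^2 \;\leq\; (1-hC_3)\,\mathbb{E}\|\theta_{k}^{(i)}-\thetab_{kh}^{(i)}\|^2 + h^3 C_4^2 d + h^2\,\mathbb{E}\|G_k^{(i)}-F(\theta_k^{(i)})\|^2,
\end{align*}
with the last term carrying the only algorithm-dependent information. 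The SVRG identity $G_k^{(i)}-F(\theta_k^{(i)}) = \frac{N}{B}\sum_{j\in I_k}[F_j(\theta_k^{(i)})-F_j(\widetilde\theta^{(i)})] - [F(\theta_k^{(i)})-F(\widetilde\theta^{(i)})]$, together with Lipschitzness of each $F_j$ and Assumption \ref{ass:ass3}, bounds this variance by $\tfrac{C_5}{B}\,\mathbb{E}\|\theta_k^{(i)}-\widetilde\theta^{(i)}\|^2$ plus an $\mathcal{O}(d/N^2)$ residual, so the whole problem reduces to controlling $\mathbb{E}\|\theta_k^{(i)}-\widetilde\theta^{(i)}\|^2$ throughout an epoch.

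The hard part is this last bound, and it is precisely where Options I and II diverge. For Option II the snapshot $\widetilde\theta^{(i)}$ is the iterate at the start of the current epoch, so $\mathbb{E}\|\theta_k^{(i)}-\widetilde\theta^{(i)}\|^2$ is obtained by summing at most $\tau$ squared single-step increments of \eqref{eq:particle_num}, each of order $h^2\|F\|^2 + h d$; this feeds back into the recursion, produces the extra factor $\tau$ visible in the last term $9hC_2\tau\sqrt{d}/(M^\alpha\sqrt{BC_3})$ of the Option II bound, and requires $h<\sqrt{B}/(4\tau C_2)$ to keep the combined recursion contractive. For Option I the rewind-to-a-random-index trick forces a self-referential argument: $\widetilde\theta^{(i)}$ is uniform on the previous epoch, so $\mathbb{E}\|\theta_k^{(i)}-\widetilde\theta^{(i)}\|^2$ is itself an epoch-average of the quantity one is trying to control. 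One resolves this by telescoping over a full epoch and picking $\tau = 4/(hC_3(1-2hC_2(1+2/B)))$ so that the SVRG variance absorbs into a geometric tail; this is what inflates the contraction constant from $C_3/4$ to $C_3/56$ in the exponent but eliminates the $\tau$-dependence from the variance term, replacing it with the $\sqrt{h}/\sqrt{B}$ scaling. The main obstacle I anticipate is making this self-referential epoch bookkeeping for Option I tight enough to produce a clean exponential, which is likely why the step-size constraint $h<1/(8C_2)$ and the batch constraint $B\geq 2$ appear exactly in that form.

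Finally, I would unroll the per-step recursion over $T$ steps: the $(1-hC_3/4)$ factor converts to $\exp(-C_3hT/4)$ (resp.\ $\exp(-C_3hT/56)$ under Option I), and the geometric series of discretization and variance contributions pick up the $1/C_3$ prefactors. Combining with (a) and the SPOS particle scaling $M^{-\alpha}$ inherited from the continuous-system analysis gives the three remaining terms in each displayed bound, completing the proof.
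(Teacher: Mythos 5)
Your high-level plan (triangle inequality with the $C_1/\sqrt{M}$ bias term, then an SVRG-type coupling analysis of the discretized dynamics) differs from the paper, which never redoes a coupling argument: it concatenates the particles into $\Thetab=[\thetab^{(1)},\dots,\thetab^{(M)}]\in\mathbb{R}^{Md}$, verifies in Theorem~\ref{the:them4} that the joint drift $F^{\Thetab}$ is sum-decomposable, $C_2$-smooth, $C_3$-strongly convex and $C_4$-Hessian-Lipschitz, notes that SVRG-POS is exactly SVRG-LD run on $\Thetab$, and then cites the known SVRG-LD bounds (Options I and II, with the $1/56$ exponent, $B\geq 2$, and the $\tau$, $h$ conditions) in dimension $Md$, finally transferring to the single-particle marginal. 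Your route could in principle reproduce those citations from scratch, but as written it has two genuine gaps. First, your one-step recursion is stated per particle, with contraction constant $C_3$ for $\mathbb{E}\|\theta_{k+1}^{(i)}-\thetab_{(k+1)h}^{(i)}\|^2$. This is not valid particle-by-particle: the drift of particle $i$ contains the kernel terms $K(\theta^{(i)}-\theta^{(q)})F(\theta^{(q)})$ and $\nabla K(\theta^{(i)}-\theta^{(q)})$, so the error of particle $i$ feeds on the errors of all other particles, and the dissipativity constant $C_3=\beta^{-1}m_F-2L_F-3H_FL_K$ only emerges after summing over the whole system (this is precisely the content of the strong-convexity part of Theorem~\ref{the:them4}, which your plan never establishes). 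The recursion must be run on the concatenated vector, and the smoothness/Hessian-Lipschitz/variance constants of the \emph{joint} drift have to be verified, not just invoked from Assumptions~\ref{ass:ass1}--\ref{ass:ass2}.

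Second, and more importantly, your final step attributes the $M^{-\alpha}$ and $dM^{1/2-\alpha}$ factors to ``the SPOS particle scaling inherited from the continuous-system analysis.'' No such scaling exists there: the continuous analysis only supplies the $C_1/\sqrt{M}$ bias. In the paper those factors arise because the SVRG-LD bound lives in dimension $Md$ (producing $C_4Md$ and $C_2^{3/2}\sqrt{Md}$ terms for $\mathcal{W}_2(\Lambda_T,\Gamma_\infty)$), and because the joint-to-marginal step uses exchangeability together with Proposition~\ref{pro:pro1}, $\sum_i\mathcal{W}_2^2(\thetab_1^{(i)},\thetab_2^{(i)})\leq \mathcal{W}_2^2(\Thetab_1,\Thetab_2)$, which provably gives only a $M^{-1/2}$ factor; the stronger $M^{-(1/2+\alpha)}$ factor that yields the stated theorem is an additional explicit assumption in the appendix, not a consequence of the analysis. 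Your plan contains neither the marginalization step nor any mechanism generating $\alpha$, so it cannot arrive at the bounds as stated. A minor further inaccuracy: for Theorem~\ref{thm:thm2} you invoke Assumption~\ref{ass:ass3} and an $\mathcal{O}(d/N^2)$ residual in the SVRG variance bound, but the snapshot gradient in SVRG-POS is exact, so the variance is controlled by Lipschitzness and $\|\theta_k^{(i)}-\widetilde{\theta}^{(i)}\|^2$ alone; the $\sigma$-dependent residual (and the constant $C_5$) belongs to SVRG-POS$^+$ in Theorem~\ref{thm:thm3}.
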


\begin{theorem} \label{thm:thm3}
	Let $\mu_T$ denote the distribution of particles after $T$ iterations with SVRG-POS$^{+}$. Under Assumption \ref{ass:ass1}, \ref{ass:ass2} and \ref{ass:ass3}, if we set the step size $h \leq \min\{(\frac{BC_3 }{24{C_2}^4\tau^2})^{\frac{1}{3}},\frac{1}{6\tau({C_5}^2/b+{C_2})}\}$, then the convergence rate of SVRG-POS$^+$ is bounded for all T as 
	\begin{align}
	\mathcal{W}_2(\mu_T,&\mu^*)\leq \frac{C_1}{\sqrt{M}}+(1-hC_3/4)^T\mathcal{W}_2(\mu_0,\mu^*)
	\nonumber \\
	&+\frac{3C_5 d^{1/2}}{M^{\alpha}C_3 b^{1/2}}\boldsymbol{1}(b\leq N)+\frac{2h(C_4dM^{1/2-\alpha})}{C_3}+\frac{2hC_2^{3/2}d^{1/2}}{C_3M^{\alpha}}\nonumber \\
	&+\frac{4hC_2(\tau d)^{1/2}\wedge 3h^{1/2}d^{1/2}C_5}{M^{\alpha}\sqrt{BC_3}}
	\end{align}
\end{theorem}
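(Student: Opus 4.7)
The plan is to follow the coupling-based framework developed for SPOS in \cite{ZhangZC:18} and refine it to accommodate the SVRG-POS$^+$ gradient estimator. First I would decompose the error by triangle inequality as
\begin{align*}
\mathcal{W}_2(\mu_T,\mu^*) \leq \mathcal{W}_2(\mu_T,\widetilde{\mu}_T) + \mathcal{W}_2(\widetilde{\mu}_T,\rho_T) + \mathcal{W}_2(\rho_T,\mu^*),
\end{align*}
where $\rho_T$ is the marginal of the continuous-time interacting-particle SDE (\ref{eq:particle}), $\widetilde{\mu}_T$ is the law of one particle of the $M$-particle system run with exact gradients, and $\mu_T$ is the law of one particle when gradients are replaced by the SVRG-POS$^+$ estimator. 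The first and last terms reuse results from \cite{ZhangZC:18}: the third term contributes $C_1/\sqrt{M}$ (propagation-of-chaos bound), and the first yields the contraction factor $(1-hC_3/4)^T\mathcal{W}_2(\mu_0,\mu^*)$ once strong convexity of $U$ and dissipativity of $\nabla K$ are combined. The core new work is controlling $\mathcal{W}_2(\widetilde{\mu}_T,\rho_T)$.

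Next I would quantify the mean-square error of the estimator $G_k^{(i)} = \widetilde{G}^{(i)} + \frac{N}{B}\sum_{j\in I_k}[F_j(\theta_k^{(i)})-F_j(\widetilde{\theta}^{(i)})]$ against $F(\theta_k^{(i)})$. By independence of $I_k$ and $J_k$, this error splits into (a) a subsampled-anchor bias $\widetilde{G}^{(i)}-F(\widetilde{\theta}^{(i)})$, whose second moment is bounded by $C_5 d/b$ when $b\leq N$ (Assumption \ref{ass:ass3}) and vanishes otherwise, which explains the $\boldsymbol{1}(b\leq N)$ factor; and (b) an SVRG-type inner variance bounded by $\frac{L_F^2}{B}\|\theta_k^{(i)}-\widetilde{\theta}^{(i)}\|^2$ via Assumption \ref{ass:ass1}. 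The minimum $4hC_2(\tau d)^{1/2}\wedge 3h^{1/2}d^{1/2}C_5$ in the last term anticipates two ways of bounding (b): either using the crude bound $\|\theta_k^{(i)}-\widetilde{\theta}^{(i)}\|\leq O(h\tau)$ from the Lipschitz drift, or the sharper bound of order $h^{1/2}$ exploiting the diffusive nature of the increments together with Assumption \ref{ass:ass3}.

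Then I would set up a one-step $\mathcal{W}_2$ recursion via synchronous coupling between the SVRG-POS$^+$ iterates and the continuous-time dynamics. Using strong convexity to obtain contraction $(1-hC_3+O(h^2))$ and plugging in the variance bound of the previous paragraph together with the standard discretization error $O(h^2C_4^2 d M^{1-2\alpha})$ (coming from Assumption \ref{ass:ass2} applied between grid points and handled exactly as in \cite{ZhangZC:18}), I obtain
\begin{align*}
\mathbb{E}\|\Delta_{k+1}\|^2 \leq (1-hC_3+O(h^2))\mathbb{E}\|\Delta_k\|^2 + O\Bigl(h^2 C_4^2 d M^{1-2\alpha}+h^2C_2^3 d M^{-2\alpha}+\tfrac{h^2 V_k}{B}+\tfrac{h^2 C_5^2 d}{b M^{2\alpha}}\Bigr),
\end{align*}
where $V_k$ is the inner-loop SVRG variance. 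Unrolling this geometric recursion and taking square roots yields the stated bound, modulo summing $\|\theta_k^{(i)}-\widetilde{\theta}^{(i)}\|^2$ across an epoch of length $\tau$.

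The main obstacle will be the epoch-wise control of the drift $\|\theta_k^{(i)}-\widetilde{\theta}^{(i)}\|^2$: within an epoch this quantity grows because $\widetilde{\theta}^{(i)}$ is frozen, yet it feeds back into the gradient variance which itself drives further growth. To close the loop I would bound $\max_{0\leq \ell<\tau}\mathbb{E}\|\theta_{k+\ell}^{(i)}-\widetilde{\theta}^{(i)}\|^2$ using the discrete Grönwall inequality together with the boundedness of $F$ and $\nabla K$, obtaining a bound of order $\tau^2 h^2 + \tau h d$. Plugging this back and requiring $h\leq (BC_3/(24C_2^4\tau^2))^{1/3}$ and $h\leq 1/(6\tau(C_5^2/b+C_2))$ ensures the feedback coefficient stays below the contraction rate $hC_3/4$, which is precisely the step-size constraint stated in the theorem. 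Once these pieces are in place, combining the three Wasserstein contributions delivers the final bound.
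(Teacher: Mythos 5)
Your route is genuinely different from the paper's: the paper never re-derives an SVRG-type recursion for the particle system. Instead it concatenates the $M$ particles into a single vector $\Thetab\in\mathbb{R}^{Md}$, proves that the lifted drift $F^{\Thetab}$ is sum-decomposable, $C_2$-smooth, $C_3$-strongly convex, $C_4$-Hessian-Lipschitz and has bounded variance $C_5$, and then invokes the existing SVRG-LD$^+$ theorem of \cite{DIFAN:2018} verbatim in dimension $Md$, projecting back to a single particle via the marginal inequality $\sum_i\mathcal{W}_2^2(\thetab^{(i)}_1,\thetab^{(i)}_2)\leq \mathcal{W}_2^2(\Thetab_1,\Thetab_2)$ and exchangeability. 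Your plan to redo the coupling and recursion directly is legitimate in principle, but as written it has a genuine gap: the $M^{-\alpha}$, $M^{1/2-\alpha}$ and (implicitly) $M^{-2\alpha}$ factors cannot come out of the argument you describe. In the paper these factors are \emph{not} proved; they are injected through an explicitly assumed inequality $\mathcal{W}_2(\mu_T,\nu_\infty)\leq M^{-(1/2+\alpha)}\mathcal{W}_2(\Lambda_T,\Gamma_\infty)$, which is strictly stronger than the $M^{-1/2}$ bound that exchangeability actually yields. Your proposal simply writes the factors into the intermediate estimates (e.g.\ a discretization error of order $h^2C_4^2 d M^{1-2\alpha}$ ``handled exactly as in \cite{ZhangZC:18}''), but neither \cite{ZhangZC:18} nor a synchronous coupling supplies any mechanism for a gain beyond $M^{-1/2}$; you would have to either prove that assumption or state it as a hypothesis, and without it your argument can only deliver the theorem with $\alpha=0$. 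A related structural point: a coupling for one particle is not closed, since its drift involves all other particles, so the contraction must be run on the joint $Md$-dimensional system (or on the particle average) with the interacting-drift convexity constant $C_3=\beta^{-1}m_F-2L_F-3H_FL_K$, not on $U$ alone.

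Two further steps would fail as written. First, your triangle decomposition misallocates the terms: $\mathcal{W}_2(\mu_T,\widetilde{\mu}_T)$ (variance-reduced versus exact-gradient discrete dynamics) does not produce the contraction factor $(1-hC_3/4)^T\mathcal{W}_2(\mu_0,\mu^*)$, and $\mathcal{W}_2(\rho_T,\mu^*)$ at finite $T$ is not the propagation-of-chaos bound $C_1/\sqrt{M}$, which the paper states for the stationary law $\nu_\infty$; the decomposition has to go through $\nu_\infty$ (or fold the convergence-to-stationarity of the continuous system into that term). Second, your one-step recursion treats the subsampled-anchor error as fresh per-iteration noise of size $h^2C_5^2d/b$, which after unrolling gives a term of order $h^{1/2}C_5\sqrt{d/b}$; but $\widetilde{G}^{(i)}-F(\widetilde{\theta}^{(i)})$ is frozen for an entire epoch and accumulates coherently over $\tau=\Theta(1/(hC_3))$ steps, which is exactly why the stated bound contains the $h$-independent term $\frac{3C_5d^{1/2}}{M^{\alpha}C_3b^{1/2}}\boldsymbol{1}(b\leq N)$. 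To recover that term (and the $4hC_2(\tau d)^{1/2}\wedge 3h^{1/2}d^{1/2}C_5$ alternative) you would need the epoch-level bias analysis of \cite{DIFAN:2018}, not an i.i.d.-noise recursion; your Gr\"onwall control of $\|\theta_k^{(i)}-\widetilde{\theta}^{(i)}\|$ addresses the SVRG inner variance but not this coherent bias.
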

Since the complexity has been discussed in the Section~\ref{sec:alg}, we mainly focus on discussing the convergence rates here. Due to space limit, we move the comparison between convergence rates of the standard SPOS and its variance-reduced counterparts such as SAGA-POS into the SM. Specifically, 
adopting the standard framework of comparing different variance-reduction techniques in SGLD \cite{DubeyRPSX:nips16,Nilardri:2018,DIFAN:2018}, we focus on the scenario where $m_f$, $L_F$, $H_F$ and $D_F$ all scale linearly with $N$ with $N \gg d$. In this case, the dominating term in Theorem~\ref{thm:thm1} for SAGA-POS is the last term, $\mathcal{O}(\frac{hC_2\sqrt{d}}{M^{\alpha}B})\approx \mathcal{O}(\frac{hN\sqrt{d}}{M^{\alpha}B})$. Thus to achieve an accuracy of $\varepsilon$, we would need the stepsize $h_{ag}=\mathcal{O}(\frac{\varepsilon M^{\alpha}B}{N\sqrt{d}})$. For SVRG-POS, the dominating term in Theorem~\ref{thm:thm2} is  $\mathcal{O}(\frac{\sqrt{hNd}}{M^{\alpha}\sqrt{B}})$ for Option \uppercase\expandafter{\romannumeral1} and $\mathcal{O}(\frac{\tau hN\sqrt{d}}{M^{\alpha}\sqrt{B}})$ for Option \uppercase\expandafter{\romannumeral2}. Hence, for an accuracy of $\varepsilon$, the corresponding step sizes are $h_{vr1}=\mathcal{O}(\frac{\varepsilon^2 M^{2\alpha}B}{Nd})$  and $h_{vr2}=\mathcal{O}(\frac{\varepsilon M^{\alpha}\sqrt{B}}{\tau N\sqrt{d}})$, respectively. Due to the fact that the mixing time $T$ for these methods is roughly proportional to the reciprocal of step size \cite{Nilardri:2018}, it is seen that when $\varepsilon$ is small enough, one can have $h_{vr1}\ll h_{ag}$, which causes SAGA-POS converges faster than SVRG-POS (Option \uppercase\expandafter{\romannumeral1}). Similar results hold for Option \uppercase\expandafter{\romannumeral2} since the factor $\frac{1}{\sqrt{B}\tau}$ in $h_{vr2}$ would make the step size even smaller. More theoretical results are given in the SM.

%

\begin{remark}\label{remark4}
	We have provided theoretical analysis to support the statement of $\RN{1})$ in Remark \ref{remark2} . Moreover, we should also notice in SAGA-POS, stepsize $h_{ag}=\mathcal{O}(\frac{\varepsilon M^{\alpha}B}{N\sqrt{d}})$ has an extra factor, $M^{\alpha}$, compared with the step size $\mathcal{O}(\frac{\varepsilon B}{N\sqrt{d}})$ used in SAGA-LD \cite{Nilardri:2018}\footnote{For fair comparisons with our algorithms, we consider variance-reduced versions of SGLD with $M$ independent chains.}. This means SAGA-POS with more particles ($M$ is large) would outperform SAGA-LD. SVRG-POS and SVRG-POS$^+$ have similar conclusions. This theoretically supports the statements of Remark \ref{remark1} and  $\RN{1})$ in Remark \ref{remark2}. 
	Furthermore, an interesting result from the above discussion is that when $h_{vr1}=\mathcal{O}(\frac{\varepsilon^2 M^{2\alpha}B}{Nd})$ in SVRG-POS, there is an extra factor $M$ compared to the stepsize $\mathcal{O}(\frac{\varepsilon^2 B}{Nd})$ in SVRG-LD \cite{Nilardri:2018}. Since the order of $M^{2\alpha}$ is higher than $M^{\alpha}$, one expects that the improvement of SVRG-POS over SVRG-LD is much more significant than that of SAGA-POS over SAGA-LD. This conclusion will be verified in our experiments.
\end{remark}

\section{Experiments}\label{sec:exp}
We conduct experiments to verify our theory, and compare SAGA-POS, SVRG-POS and SVRG-POS$^+$  with existing representative Bayesian sampling methods with/without variance-reduction techniques, {\it e.g.} SGLD and SPOS without variance reduction; SAGA-LD, SVRG-LD and SVRG-LD$^{+}$ with variance reduction. For SVRG-POS, we focus on Option I in Algorithm~\ref{algo:algo2} to verify our theory. 

\subsection{Synthetic log-normal distribution}
We first evaluate our proposed algorithms on a log-normal synthetic data, defined as ${\small p(\mathbf{x}|\boldsymbol{\mu})=\frac{1}{\mathbf{x} \sqrt{2\pi}} \exp(-\frac{(\ln \mathbf{x} - \boldsymbol{\mu})^2}{2})}$ where $\mathbf{x}, \boldsymbol{\mu} \in \mathbb{R}^{10}$. We calculate log-MSE of the sampled ``mean'' w.r.t.\! the true value, and plot the log-MSE versus number of passes through data \cite{Nilardri:2018}, like other variance-reduction algorithms in Figure~\ref{fig:syn}, which shows that SAGA-POS and SVRG-POS converge the fastest among other algorithms. It is also interesting to see SPOS even outperforms both SAGA-LD and SVRG-LD.

\begin{figure}[t!]
	\vspace{-0.1cm}
	\centering
	\includegraphics[width=0.95\linewidth]{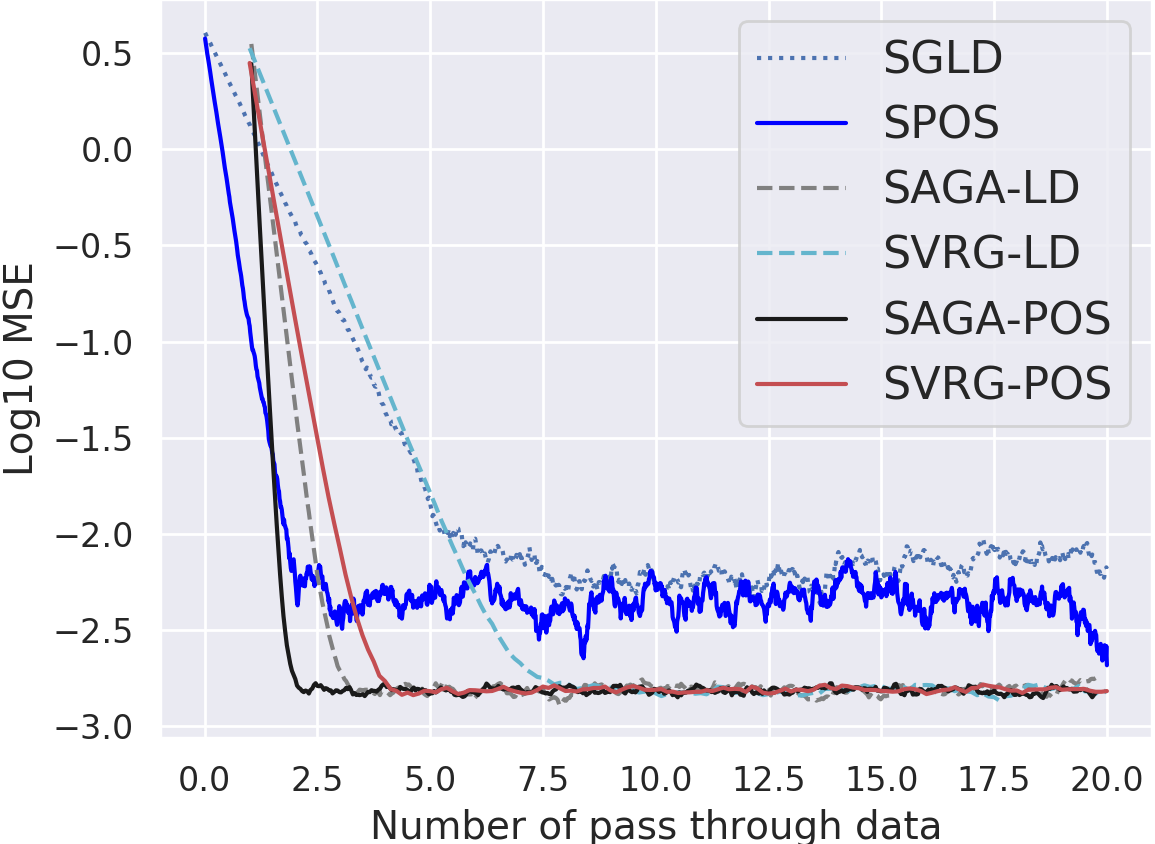}
	\vspace{-0.3cm}
	\caption{Log-MSE of the mean parameter versus the number of dataset pass.}
	\label{fig:syn}
	\vspace{-0.6cm}
\end{figure}

\subsection{Bayesian logistic regression}
Following related work such as \cite{DubeyRPSX:nips16}, we test the proposed algorithms for Bayesian-logistic-regression (BLR) on four publicly available datasets from the UCI machine learning repository: $Australian$ (690-14), $Pima$ (768-8), $Diabetic$ (1151-20) and $Susy$ (100000-18), where $(N-d)$ means a dataset of $N$ data points with dimensionality $d$. The first three datasets are relatively small, and the last one is a large dataset suitable to evaluate scalable Bayesian sampling algorithms. 


Specifically, consider a dataset $\{X_i, y_i\}_{i=1}^N$ with $N$ samples, where $X_i \in \mathbb{R}^d$ and $y_i \in \{0, 1\}$. The likelihood of a BLR model is written as $p(y_i=1|X_i, \alpha)=\sig(\alpha^TX_i)$ with regression coefficient $\alpha \in \mathbb{R}^d$, which is assumed to be sampled from a standard multivariate Gaussian prior $\mathcal{N}(0, I)$ for simplicity. 
The datasets are split into 80\% training data and 20\% testing data. Optimized constant stepsizes are applied for each algorithm via grid search. 
Following existing work, we report testing accuracy and log-likelihood versus the number of data passes for each dataset, averaging over 10 runs with 50 particles. The minibatch size is set to 15 for all experiments.

\subsubsection{Variance-reduced SPOS versus SPOS}
We first compare SAGA-POS, SVRG-POS and SVRG-POS$^+$ with SPOS without variance reduction proposed in \cite{ZhangZC:18}. The testing accuracies and log-likelihoods versus number of passes through data on the four datasets are plotted in Figure~\ref{fig:blr2}. It is observed that SAGA-POS converges faster than both SVRG-POS and SVRG-POS$^+$, all of which outperform SPOS significantly. On the largest dataset {\em SUSY}, SAGA-POS starts only after one pass through data, which then converges quickly, outperforming other algorithms. And SVRG-POS$^+$ outperforms SVRG-POS due to the dataset {\em SUSY} is so large. All of these phenomena are consistent with our theory.

\begin{figure}
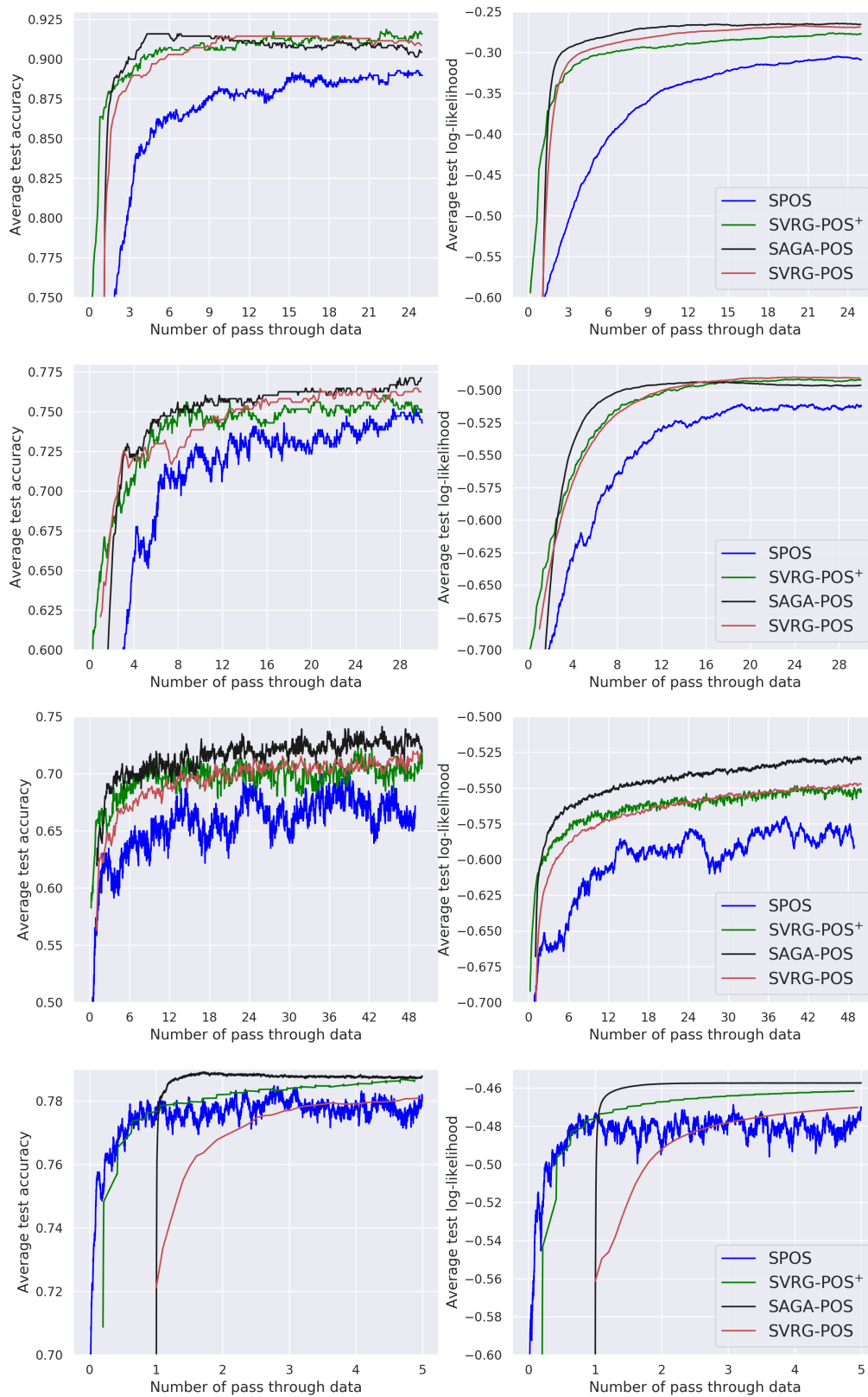

	\centering
	\begin{minipage}{\linewidth}
		\centering
		\includegraphics[width=0.9\linewidth]{AUSTRALIAN_}
	\end{minipage}
	\begin{minipage}{\linewidth}
		\centering
		\includegraphics[width=0.9\linewidth]{PIMA_}
	\end{minipage}
	\begin{minipage}{\linewidth}
		\centering
		\includegraphics[width=0.9\linewidth]{DIABETIC_}
	\end{minipage}
	\begin{minipage}{\linewidth}
		\centering
		\includegraphics[width=0.9\linewidth]{SUSY_}
	\end{minipage}
	\caption{Testing accuracy and log-likelihood vs the number of data pass for SPOS and its variance-reduction variants. From top to bottom: AUSTRALIAN, PIMA, DIABETIC, SUSY datasets.}
	\label{fig:blr2}
\end{figure}

\subsubsection{Variance-reduced SPOS versus variance-reduced SGLD}
Next we compare the three variance-reduced SPOS algorithms with its SGLD counterparts, {\it i.e.}, SAGA-LD, SVRG-LD and SVRG-LD$^+$. The results are plotted in Figure~\ref{fig:blr1}. Similar phenomena are observed, where both SAGA-POS and SVRG-POS outperform SAGA-LD and SVRG-LD, respectively, consistent with our theoretical results discussed in Remark~\ref{remark1} and \ref{remark2}. Interestingly, in the PIMA dataset case, SVRG-LD is observed to perform even worse (converge slower) than standard SGLD. Furthermore, as discussed in Remark~\ref{remark4}, our theory indicates that the improvement of SVRG-POS over SVRG-LD is more significant than that of SAGA-POS over SAGA-LD. This is indeed true by inspecting the plots in Figure~\ref{fig:blr1}.

\begin{figure}
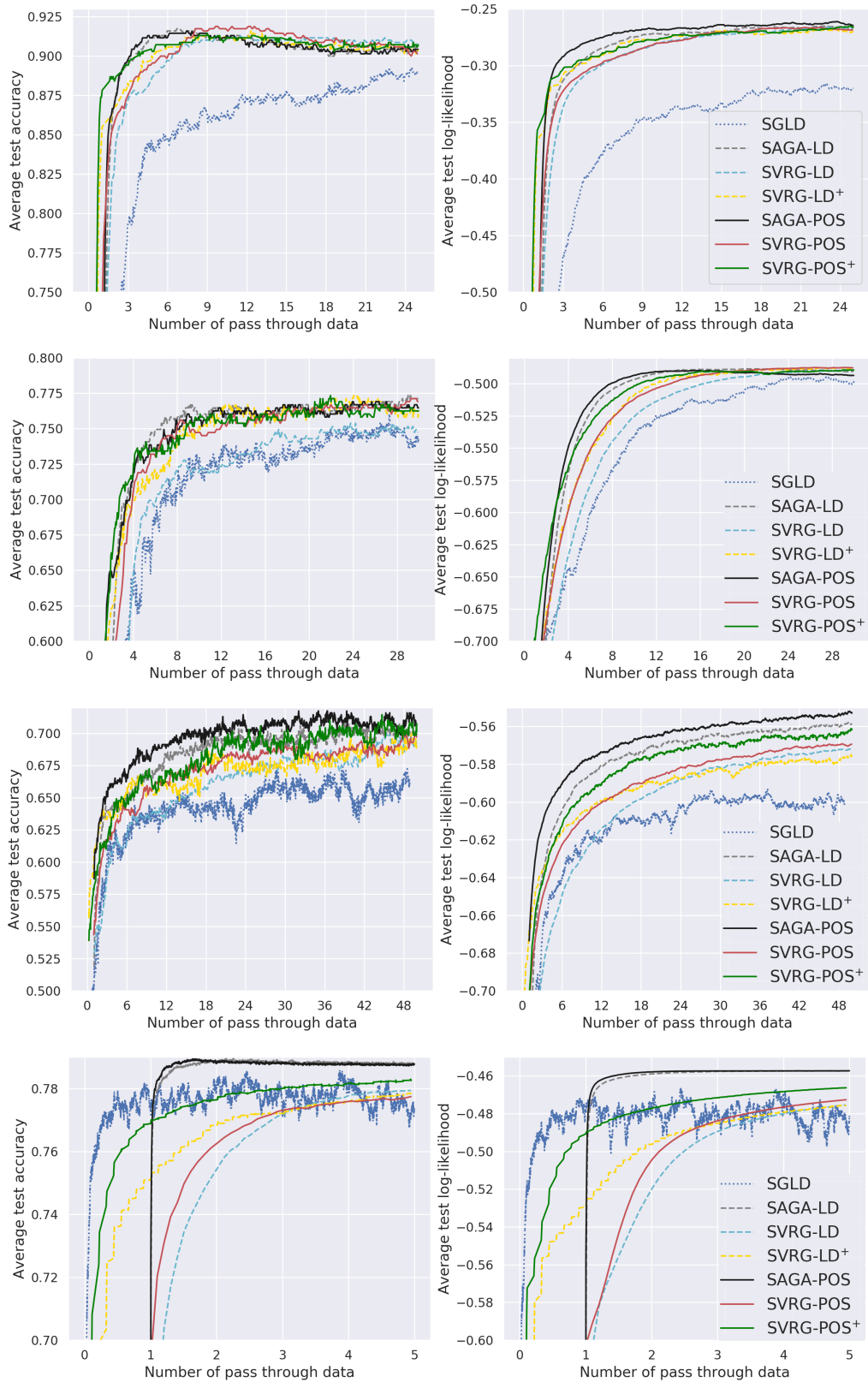

	\centering
	\begin{minipage}{\linewidth}
		\centering
		\includegraphics[width=0.9\linewidth]{AUSTRALIAN}
	\end{minipage}
	\begin{minipage}{\linewidth}
		\centering
		\includegraphics[width=0.9\linewidth]{PIMA}
	\end{minipage}
	\begin{minipage}{\linewidth}
		\centering
		\includegraphics[width=0.9\linewidth]{DIABETIC}
	\end{minipage}
	\begin{minipage}{\linewidth}
		\centering
		\includegraphics[width=0.9\linewidth]{SUSY}
	\end{minipage}
	\caption{Testing accuracy and log-likelihood versus the number of dataset pass for variance-reduced SPOS and SGLD. From top to bottom: AUSTRALIAN, PIMA, DIABETIC, SUSY datasets.}
	\label{fig:blr1}
	\vspace{-0.5cm}
\end{figure}

\subsubsection{Impact of number of particles}

Finally we examine the impact of number of particles to the convergence rates. As indicated by Theorems~\ref{thm:thm1}-\ref{thm:thm3}, for a fixed number of iterations $T$, the convergence error in terms of 2-Wasserstein distance decreases with increasing number of particles. To verify this, we run SAGA-POS and SVRG-POS for BLR with the number of particles ranging between $\{1, 2, 4, 8, 16\}$. The test log-likelihoods versus iteration numbers are plotted in Figure~\ref{fig:blr3}, demonstrating consistency with our theory.

\begin{figure}[t!]
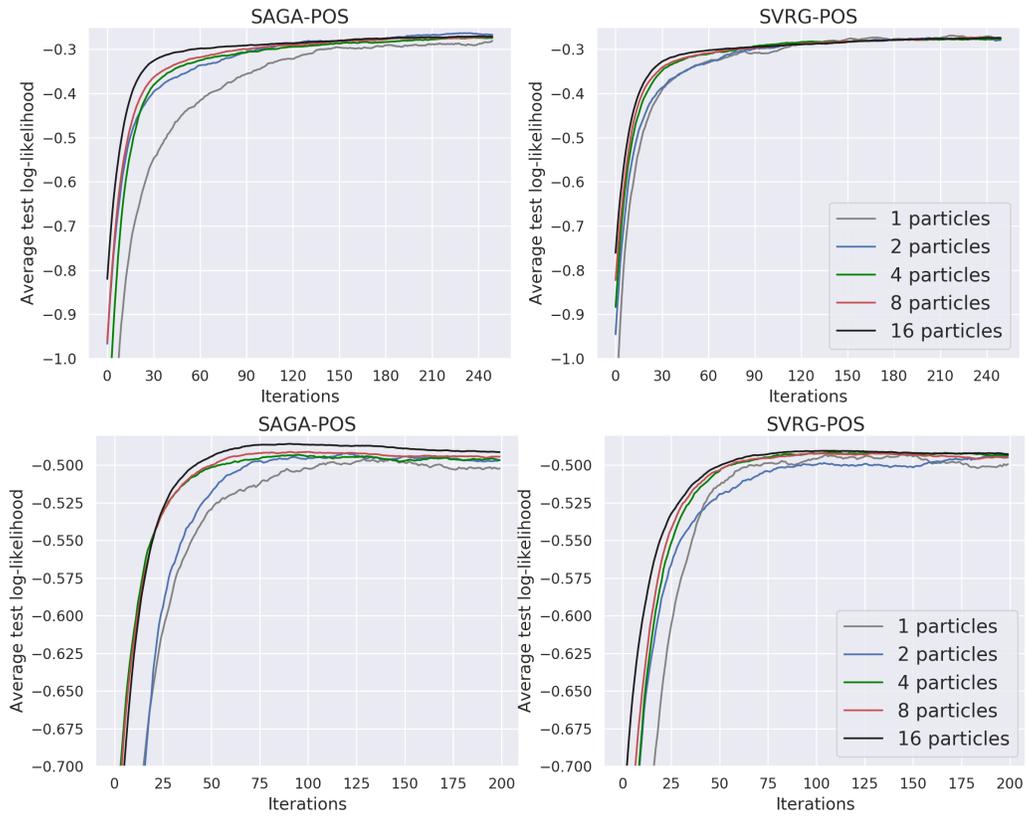

	\centering
	\begin{minipage}{\linewidth}
		\centering
		\includegraphics[width=0.9\linewidth]{australian_part}
	\end{minipage}
	\begin{minipage}{\linewidth}
		\centering
		\includegraphics[width=0.9\linewidth]{pima_part}
	\end{minipage}
	\caption{Testing log-likelihood versus number of iterations with different number of particles for variance-reduced SPOS. Top: Australian; Bottom: Pima datasets.}
	\label{fig:blr3}
	\vspace{-0.5cm}
\end{figure}

\section{Conclusion}

We propose several variance-reduction techniques for stochastic particle-optimization sampling, and for the first time, develop nonasymptotic convergence theory for the algorithms in terms of 2-Wasserstein metrics. Our theoretical results indicate the improvement of convergence rates for the proposed variance-reduced SPOS compared to both standard SPOS and the variance-reduced SGLD algorithms. Our theory is verified by a number of experiments on both synthetic data and real data for Bayesian Logistic regression. Leveraging both our theory and empirical findings, we recommend the following algorithm choices in practice: $\RN{1})$ SAGA-POS is preferable when storage is not a concern; $\RN{2})$ SVRG-POS is a better choice when storage is a concern and full gradients are feasible to calculate; $\RN{3})$ Otherwise, SVRG-POS$^+$ is a good choice and works well in practice.

\newpage
\bibliographystyle{alpha}
\bibliography{reference}

\appendix
\section{More details about the notations}
\begin{itemize}
	\item If you read this paper carefully, you may notice the different use of $\thetab$ and $\theta$. $\thetab$ is mostly used for the interpretation of the theory. However, $\theta$ is only used for the interpretation of algorithms, which means  $\theta$ often appears with $k$ (which stands for the $k$th interation ) like $\theta_k$. We design these differences to help you have a better understanding of our results. \\
	The above rules still apply for the results in Appendix.  
	\item The symbol $\boldsymbol{1}(H_1 \leq H_2)$ in  Theorem \ref{thm:thm3} means 
	\begin{equation}
	\boldsymbol{1}(H_1 \leq H_2)=
	\begin{cases}
	1& H_1 \leq H_2\\
	0& H_1 > H_2
	\end{cases}
	\end{equation}\\ and the symbol $H_3 \wedge H_4$ means $\min\{H_3,H_4\}$\\\
	\item The relationship between RBF kernel $\kappa(\thetab,\thetab^{\prime})=\exp(-\frac{\| \thetab-\thetab^{\prime} \|^2}{2\eta^2})$  and the function $K(\thetab)=\exp(-\frac{\| \thetab \|^2}{2\eta^2})$ can be interpreted as $\kappa(\thetab,\thetab^{\prime})=K(\thetab-\thetab^{\prime})$ in detail.\\
	
	We moved the above details about the notations to the appendix due to the space limit.
\end{itemize}

\section{Convergence guarantees for SAGA-LD, SVRG-LD and SVRG-LD$^+$}
In this section we present the Convergence guarantees for SAGA-LD, SVRG-LD and SVRG-LD+ from \cite{Nilardri:2018,DIFAN:2018}
\begin{assumption}\label{ass:assnew}
	\begin{itemize}
		\item(Sum-decomposable) The $ F(\thetab)$ is decomposable i.e. $F(\thetab)=\sum_{j=1}^NF_j(\thetab)$
		\item(Smoothness)  $F(\thetab)$ is Lipschitz continuous with some positive constant, i.e. for all $\thetab_1, \thetab_2 \in \mathbb{R}^{d}$, $\|F(\thetab_1)-F(\thetab_2)\|\leq L_F \left\|\thetab_1- \thetab_2\right\|$ 
		\item(Strong convexity)  $F(\thetab)$ is a $m_F$-strongly convex function, i.e.
		$\left (F(\thetab_1)-F(\thetab_2 ) \right)(\thetab_1-\thetab_2)	\geq
		m_F\left\| \thetab_1-\thetab_2 \right\|$
		\item(Hessian Lischitz) There exits such a positive constant such that $\left\|\nabla F(\thetab_1)-\nabla F(\thetab_2)\right\| \leq D_F\left\|\thetab_1-\thetab_2\right\|$
	\end{itemize}
	
\end{assumption}
\begin{assumption}\label{ass:assnew1}
	(Bound Variance)\footnote{This assumption is a little different from that in \cite{DIFAN:2018} since we adopt different definition of $F_j$} There exits a constant $\sigma \geq 0 $, such that for all j
	\begin{align*}
	\mathbb{E}[\|F_j(\thetab)-\frac{1}{N}\sum \limits_{j=1}^{N}F_j(\thetab)\|^2]\leq d\sigma^2/N^2
	\end{align*}
\end{assumption}
\begin{theorem}\label{thm:citeThe1}
	Under Assumption \ref{ass:assnew}, let the step size $h < \frac{B}{8NL_F}$ and the batch size $B \geq 9$, then we can have the bound for $\mathcal{W}_2(\mu_T,\mu^*)$ in the SAGA-LD algorithm
	
	\begin{align}
	\mathcal{W}_2(\mu_T,\mu^*)\leq &
	5 \exp(-\frac{m_F h}{4}T)\mathcal{W}_2(\mu_0,\mu^*)+ \nonumber\\ 
	&\frac{2hD_Fd}{m_F}+\frac{2h{L_F}^\frac{3}{2}\sqrt{d}}{m_F}+\frac{24hL_F\sqrt{dN}}{\sqrt{m_F}B} \nonumber
	\end{align}
\end{theorem}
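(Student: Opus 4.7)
The plan is to follow the standard coupling argument for variance-reduced Langevin diffusions (as developed in \cite{Nilardri:2018}) and propagate three sources of error separately: contraction of the continuous-time Langevin flow, Euler discretization bias, and the residual variance of the SAGA gradient estimator. The SAGA-LD iterate is $\thetab_{k+1}=\thetab_k-hG_k+\sqrt{2h}\xi_k$ with $G_k = \sum_j g_{k,j}+(N/B)\sum_{j\in I_k}(F_j(\thetab_k)-g_{k,j})$, and the target $\mu^*$ is the invariant measure of the overdamped Langevin SDE $d\tilde\thetab_t=-F(\tilde\thetab_t)dt+\sqrt{2}dW_t$. I would couple $\thetab_k$ synchronously (same Brownian increments) with a stationary process $\tilde\thetab_t$ started from $\mu^*$, so that $\mathcal{W}_2(\mu_T,\mu^*)^2\leq \mathbb{E}\|\thetab_T-\tilde\thetab_{Th}\|^2$.

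The first step is the standard one-step contraction lemma. Using $m_F$-strong convexity of $U$ (equivalently, $F$) and $L_F$-Lipschitzness, writing $\Delta_k=\thetab_k-\tilde\thetab_{kh}$ and expanding $\|\Delta_{k+1}\|^2$, one obtains
\begin{align*}
\mathbb{E}\|\Delta_{k+1}\|^2 \leq (1-m_Fh)\mathbb{E}\|\Delta_k\|^2 + 2h\,\mathbb{E}\langle \Delta_{k},F(\thetab_k)-G_k\rangle + h^2\mathbb{E}\|G_k-F(\thetab_k)\|^2 + R_k^{\mathrm{disc}},
\end{align*}
where $R_k^{\mathrm{disc}}$ collects the Euler discretization error and is bounded via Hessian-Lipschitzness ($D_F$) and Ito's formula, yielding the familiar $h^3 D_F^2 d$ and $h^3 L_F^3 d$ terms that eventually produce the $2hD_Fd/m_F$ and $2hL_F^{3/2}\sqrt{d}/m_F$ contributions in the final bound.

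The second step, which is the main obstacle, is controlling the SAGA variance term $\mathbb{E}\|G_k-F(\thetab_k)\|^2$. This requires a secondary Lyapunov quantity tracking the gap between the current iterate and the stale anchor points, namely $V_k = \frac{1}{N}\sum_{j=1}^N \mathbb{E}\|\thetab_k - \phi_{k,j}\|^2$ where $\phi_{k,j}$ denotes the last iterate at which $F_j$ was evaluated. A short calculation using independence of the minibatch from the history gives $\mathbb{E}\|G_k-F(\thetab_k)\|^2 \leq (N^2L_F^2/B)V_k$, and the random-replacement SAGA bookkeeping yields the recursion $V_{k+1}\leq (1-B/N)V_k + (B/N)\mathbb{E}\|\thetab_k-\thetab_{k-\ast}\|^2$. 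The hard part is combining these into a single potential $\Phi_k = \mathbb{E}\|\Delta_k\|^2 + cV_k$ for a carefully chosen $c>0$ so that under $h<B/(8NL_F)$ and $B\geq 9$ one obtains the contractive inequality $\Phi_{k+1}\leq (1-m_Fh/4)\Phi_k + (\text{noise})$; the batch-size threshold $B\geq 9$ enters precisely to absorb the variance recursion into the contractive factor $m_Fh/4$ (versus $m_Fh$) and to ensure the multiplier on $V_k$ stays negative.

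The final step is to unroll the one-step bound. Summing the geometric series gives
\begin{align*}
\mathbb{E}\|\Delta_T\|^2 \leq (1-m_Fh/4)^T\Phi_0 + \frac{C_{\mathrm{disc}}\,h^2 d}{m_F} + \frac{C_{\mathrm{var}}\,h L_F^2 N d}{m_F B^2},
\end{align*}
and then taking square roots, using $(a+b+c)^{1/2}\leq a^{1/2}+b^{1/2}+c^{1/2}$, and optimizing the coupling of initial distributions to replace $\Phi_0^{1/2}$ by (up to the constant $5$) $\mathcal{W}_2(\mu_0,\mu^*)$, yields the three additive error terms and the claimed exponential prefactor $\exp(-m_Fh T/4)$ after using $(1-x)^T\leq e^{-xT}$. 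The constants $2$, $2$, and $24$ in the statement come from explicit tracking in this last step, and $B\geq 9$ is what makes the coefficient of the SAGA term clean (namely the $24$). The dominant difficulty throughout is the joint Lyapunov analysis in the second step; everything else is standard Langevin discretization bookkeeping.
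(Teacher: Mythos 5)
The paper does not actually prove this statement: Theorem~\ref{thm:citeThe1} is imported verbatim from \cite{Nilardri:2018} (it is the SAGA-LD convergence theorem of that reference), and the appendix section containing it explicitly presents it as a cited result that is later applied, via Theorem~\ref{the:them4} and Proposition~\ref{pro:pro1}, to the concatenated particle system. So there is no in-paper proof to compare against. That said, your outline is a faithful reconstruction of the argument in the cited source: the synchronous coupling with the stationary Langevin diffusion, the one-step contraction from $m_F$-strong convexity, the use of Hessian-Lipschitzness to get the $O(hD_Fd/m_F)$ discretization term, the Lyapunov quantity tracking the gap to the stale SAGA anchors together with the refresh recursion $V_{k+1}\leq(1-B/N)V_k+\cdots$, and the role of $B\geq 9$ and $h<B/(8NL_F)$ in absorbing the variance recursion into the contraction factor are all exactly the ingredients used there. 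One bookkeeping slip worth flagging: in your unrolled bound the accumulated SAGA-variance contribution to $\mathbb{E}\|\Delta_T\|^2$ should be of order $h^2L_F^2Nd/(m_FB^2)$ rather than $hL_F^2Nd/(m_FB^2)$ — the extra factor of $h$ comes from bounding $V_k$ by the expected per-step displacement times the $N/B$ refresh horizon — since otherwise the square root would yield a $\sqrt{h}$ term instead of the stated $24hL_F\sqrt{dN}/(\sqrt{m_F}B)$. This is an arithmetic detail in the sketch, not a conceptual gap; the overall strategy is correct and is the one the cited reference uses.
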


\begin{theorem} \label{thm:citeThe2}
	Under Assumption \ref{ass:assnew}, if we choose Option $\uppercase\expandafter{\romannumeral1}$ and set the step size $h < \frac{1}{8L_F}$, the batch size $B \geq 2$ and the epoch length $\tau \geq \frac{8}{m_F h}$, then we can have the bound for all T mod $\tau$ =0 in the SVRG-LD algorithm
	\begin{align}
	\mathcal{W}_2(\mu_T,\mu^*)\leq & \exp(-\frac{m_Fh}{56}T)\frac{\sqrt{L_F}}{\sqrt{m_F}} \mathcal{W}_2(\mu_0,\mu^*)+\nonumber \\
	&\frac{2hD_Fd}{m_F}+\frac{2h{L_F}^\frac{3}{2}\sqrt{d}}{m_F}+\frac{64L_F^{\frac{3}{2}}\sqrt{hd}}{m_F\sqrt{B}}\nonumber
	\end{align}
	If we choose Option $\uppercase\expandafter{\romannumeral2}$ and set the step size $h < \frac{\sqrt{B}}{4\tau C_2}$, then we can have the bound for all T in the SVRG-LD algorithm
	\begin{align}
	\mathcal{W}_2(\mu_T,\mu^*)\leq &\exp(-\frac{m_Fh}{4}T)\mathcal{W}_2(\mu_0,\mu^*)+\nonumber\\
	&\frac{\sqrt{2}h D_Fd}{m_F}+\frac{5h{L_F}^\frac{3}{2}\sqrt{d}}{m_F}+\frac{9hL_F\tau\sqrt{d}}{\sqrt{Bm_F}}\nonumber
	\end{align}
\end{theorem}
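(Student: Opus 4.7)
The plan is to follow the synchronous-coupling framework used for overdamped Langevin analysis. Couple the SVRG-LD iterates $\theta_k$ with a continuous-time diffusion $\vartheta_t$ driven by the same Brownian motion and whose stationary distribution is $\mu^*$. By the triangle inequality,
\begin{align*}
\mathcal{W}_2(\mu_T,\mu^*) \le \mathcal{W}_2(\mathrm{Law}(\theta_T), \mathrm{Law}(\vartheta_{Th})) + \mathcal{W}_2(\mathrm{Law}(\vartheta_{Th}), \mu^*).
\end{align*}
Strong convexity of $F$ (Assumption~\ref{ass:assnew}) gives the standard exponential contraction $\mathcal{W}_2(\mathrm{Law}(\vartheta_{Th}),\mu^*) \le \exp(-m_F h T / c)\mathcal{W}_2(\mu_0,\mu^*)$ for a suitable constant $c$, which produces the leading exponential factor in both Options. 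The $D_F d/m_F$ term comes from the Euler discretization bias under the Hessian-Lipschitz assumption, by a standard Taylor expansion of the drift over $[kh,(k+1)h]$.

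The key quantity is the variance of the SVRG gradient estimator. Because $\mathbb{E}[G_k \mid \theta_k,\widetilde{\theta}] = F(\theta_k)$, mini-batch independence and Lipschitz smoothness give
\begin{align*}
\mathbb{E}\bigl\|G_k - F(\theta_k)\bigr\|^2 \le \frac{N^2 L_F^2}{B}\,\mathbb{E}\|\theta_k - \widetilde{\theta}\|^2,
\end{align*}
so the entire problem reduces to bounding $\mathbb{E}\|\theta_k - \widetilde{\theta}\|^2$. Plugging this into the one-step squared-distance recursion between $\theta_k$ and $\vartheta_{kh}$, and using $m_F$-strong convexity to extract a $(1 - m_F h/2)$ contraction per step, yields the template $\mathbb{E}\|\theta_{k+1}-\vartheta_{(k+1)h}\|^2 \le (1 - m_F h / c')\mathbb{E}\|\theta_k-\vartheta_{kh}\|^2 + h^2\cdot(\text{variance} + \text{bias})$.

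For Option I, the random choice $l \sim \mathrm{Unif}\{0,\dots,\tau-1\}$ at the start of each epoch is the SVRG trick of Johnson--Zhang: on average over $l$, $\mathbb{E}\|\theta_k-\widetilde{\theta}\|^2$ telescopes into the average of successive squared increments over the previous epoch, each of which is $O(h^2 \|F\|^2 + h d)$. The conditions $h < 1/(8L_F)$ and $\tau = 4/(h C_3(1-2hC_2(1+2/B)))$ (here $\tau \ge 8/(m_F h)$) are precisely what makes the resulting Lyapunov function contract over a whole epoch, so that the per-epoch recursion yields the factor $\exp(-m_F h T/56)$. For Option II, the anchor $\widetilde{\theta}$ is simply the state at the beginning of the epoch, at most $\tau-1$ steps stale; expanding
\begin{align*}
\|\theta_k - \widetilde{\theta}\|^2 \le \tau \sum_{j=0}^{\tau-1}\|\theta_{k-j}-\theta_{k-j-1}\|^2
\end{align*}
and bounding each increment by $h^2\|G\|^2 + 2hd/\beta$ produces the $\tau$ factor appearing in the stochastic term $9h L_F \tau \sqrt{d}/\sqrt{B m_F}$. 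The stepsize restriction $h < \sqrt{B}/(4\tau L_F)$ keeps the resulting recursion contractive.

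The main obstacle is handling the coupling between $\theta_k$ and $\widetilde{\theta}$ cleanly: in Option I one must exploit the random uniform reset to turn a worst-case stale anchor into an average over the epoch (otherwise one loses a factor of $\tau$ and the rate degrades to Option II), while in Option II one must carefully telescope squared increments without paying a factor that blows up the variance term. Once both per-epoch recursions are established, assembling them with the discretization bias and the diffusion contraction, and optimizing constants, gives the stated bounds; the constants $5$, $64$, $9$, and $56$ are artefacts of the specific constant propagation through the per-step inequalities and are not load-bearing.
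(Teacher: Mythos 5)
Your outline is essentially sound, but it is worth being clear that the paper does not prove Theorem~\ref{thm:citeThe2} at all: the appendix section containing it opens by saying these guarantees are ``presented from'' \cite{Nilardri:2018,DIFAN:2018}, i.e.\ the theorem is imported verbatim as a black box and is only later \emph{applied} to the concatenated particle vector $\Thetab$ after verifying (in Theorem~\ref{the:them4}) that $F^{\Thetab}$ satisfies Assumption~\ref{ass:assnew}. So where the paper's route is a citation, you have reconstructed the underlying argument of the cited reference, and your reconstruction follows the same synchronous-coupling template that reference uses: contraction from $m_F$-strong convexity, an $O(hD_Fd/m_F)$ discretization bias from the Hessian-Lipschitz condition, reduction of the SVRG variance to $\mathbb{E}\|\theta_k-\widetilde{\theta}\|^2$, the Johnson--Zhang uniform reset to average that quantity over the epoch in Option~I (which is exactly what buys the $\sqrt{h}$-order stochastic term instead of the $\tau h$-order term of Option~II), and telescoping of per-step increments for Option~II. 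The one place your sketch is quantitatively off is the variance bound $\mathbb{E}\|G_k-F(\theta_k)\|^2\le \frac{N^2L_F^2}{B}\mathbb{E}\|\theta_k-\widetilde{\theta}\|^2$: under this paper's convention the component gradients $F_j$ are normalized so that $F=\sum_{j=1}^N F_j$ with each $F_j$ roughly $(L_F/N)$-Lipschitz, so the correct prefactor is $L_F^2/B$; with $N^2L_F^2/B$ the final stochastic term would carry a spurious $N$ and would not match the stated $64L_F^{3/2}\sqrt{hd}/(m_F\sqrt{B})$. Also note the paper's Option~II stepsize condition is written as $h<\sqrt{B}/(4\tau C_2)$, which is evidently a typo for the $L_F$-dependent condition you use. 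Your approach buys a self-contained and mechanistically transparent argument; the paper's buys brevity and defers all constant-tracking (the $56$, $64$, $9$, $5$) to the cited work, which is where those constants actually come from.
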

\begin{theorem} \label{thm:citeThe3}
	Under Assumption \ref{ass:assnew} and Assumption \ref{ass:assnew1}, if we set the step size $h \leq min\{(\frac{BC_3 }{24{C_2}^4\tau^2})^{\frac{1}{3}},\frac{1}{6\tau({C_5}^2/b+C_2)}\}$, then we can have the bound for all T in the SVRG-LD$^+$ algorithm. 
	\begin{align}
	&\mathcal{W}_2(\mu_T,\mu^*)\leq (1-hm_F/4)^T\mathcal{W}_2(\mu_0,\mu^*)+\nonumber\\
	&\frac{3\sigma d^{1/2}}{m_Fb^{1/2}}\boldsymbol{1}(b\leq N)+\frac{2hD_4 d}{m_F}+\frac{2h{L_F}^{3/2}d^{1/2}}{m_F}\nonumber\\
	&+\frac{4hL_F(\tau d)^{1/2}\wedge 3h^{1/2}d^{1/2}\sigma}{\sqrt{Bm_F}}\nonumber
	\end{align}
\end{theorem}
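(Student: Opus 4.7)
The plan is to adapt the analysis of SVRG-LD$^+$ from \cite{DIFAN:2018} to the constants $C_2,C_3,C_5$ used in this paper, since Assumptions \ref{ass:assnew} and \ref{ass:assnew1} match the hypotheses there. The high-level scheme is a synchronous coupling between the SVRG-LD$^+$ iterates $\{\theta_k\}$ and the exact overdamped Langevin diffusion $\mathrm{d}\theta_t = -\beta^{-1}F(\theta_t)\mathrm{d}t + \sqrt{2\beta^{-1}}\mathrm{d}\mathcal{W}_t$ whose invariant law is $\mu^*$, combined with epoch-level bookkeeping of the biased stochastic gradient.

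First, decompose the effective gradient as
\begin{align*}
G_k - F(\theta_k) = \bigl(\widetilde{G}-F(\widetilde{\theta})\bigr) + \Bigl[\tfrac{N}{B}\sum_{j\in I_k}\bigl(F_j(\theta_k)-F_j(\widetilde{\theta})\bigr) - \bigl(F(\theta_k)-F(\widetilde{\theta})\bigr)\Bigr].
\end{align*}
Assumption \ref{ass:assnew1} bounds the squared anchor bias by $d\sigma^2/b$ when $b<N$ and by $0$ when $b=N$, accounting for the $\boldsymbol{1}(b\leq N)$ factor. The minibatch term has zero conditional mean, and Lipschitzness of the $F_j$ (inherited from Assumption \ref{ass:assnew}) controls its second moment by $(N L_F^2/B)\,\mathbb{E}\|\theta_k-\widetilde{\theta}\|^2$. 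Next, bound the intra-epoch drift $\mathbb{E}\|\theta_k-\widetilde{\theta}\|^2$ over the at most $\tau$ steps since the last anchor by unrolling the SVRG-LD$^+$ recursion and using bounded gradient moments; this yields a bound of order $h\tau d + h^2\tau^2\cdot(\text{grad moments})$, and is the origin of the $\tau^2$ factor in the first step-size constraint.

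Then, establish a one-step $\mathcal{W}_2$ contraction. Strong monotonicity from Assumption \ref{ass:assnew} combined with a synchronous coupling between one SVRG-LD$^+$ step and one Euler step of the reference diffusion gives
\begin{align*}
\mathbb{E}\|\theta_{k+1}-\theta^*_{k+1}\|^2 \leq (1-hm_F/2)\,\mathbb{E}\|\theta_k-\theta^*_k\|^2 + h^2\,\mathbb{E}\|G_k-F(\theta_k)\|^2 + O(h^3 D_F^2\,d),
\end{align*}
where the $O(h^3)$ term is the Euler discretization bias captured by Hessian-Lipschitzness. Substituting the variance decomposition and the intra-epoch drift bound into this recursion and iterating across $T$ steps produces the geometric factor $(1-hm_F/4)^T\mathcal{W}_2(\mu_0,\mu^*)$ and the four additive error terms. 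The $\wedge$ in the last displayed term arises from taking the minimum of two competing control-variate bounds: the direct Lipschitz bound $4hL_F\sqrt{\tau d/(Bm_F)}$ obtained from Step 2, and the alternative $3h^{1/2}d^{1/2}\sigma/\sqrt{Bm_F}$ obtained by controlling $\mathbb{E}\|\theta_k-\widetilde{\theta}\|^2$ via a diffusion-type estimate that trades $\tau$ for $\sigma$.

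The main obstacle is closing the loop between intra-epoch drift and per-step contraction: because $\mathbb{E}\|\theta_k-\widetilde{\theta}\|^2$ does not shrink within an epoch but amplifies the control-variate variance through the Lipschitz bound, a naive estimate produces a $\tau^2$ blow-up that can swamp the $hm_F$ contraction. The two step-size thresholds are precisely what is needed to neutralize this: the first, $h^3 \leq BC_3/(24C_2^4\tau^2)$, controls the $\tau^2$ amplification, and the second, $h\tau(C_5^2/b + C_2)\leq 1/6$, controls the combined anchor-bias and diffusion variance over a full epoch, so that the accumulated noise across $\tau$ steps is a constant fraction of the contraction and the effective per-iteration contraction rate remains $1-hm_F/4$. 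Verifying that these two thresholds are simultaneously sufficient, and tracking the constants through the iteration, is the technical core of the argument of \cite{DIFAN:2018}, which transfers essentially verbatim under the identifications used here.
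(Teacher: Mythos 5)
The paper offers no proof of this statement at all: Theorem \ref{thm:citeThe3} is imported verbatim (up to notation) from \cite{DIFAN:2018}, with the appendix section explicitly framed as ``present[ing]'' results from \cite{Nilardri:2018,DIFAN:2018}. So there is no in-paper argument to compare against; what your sketch reconstructs is the proof in the cited reference, and it does so faithfully --- the anchor-bias/minibatch-variance decomposition of $G_k-F(\theta_k)$, the $d\sigma^2/b$ bound with the $\boldsymbol{1}(b\leq N)$ indicator, the intra-epoch drift bound, the synchronous-coupling one-step contraction with the $O(h^3 D_F^2 d)$ discretization term, and the role of the two step-size thresholds in taming the $\tau^2$ amplification are all the right ingredients in the right places. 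Two caveats. First, your minibatch-variance bound $(NL_F^2/B)\,\mathbb{E}\|\theta_k-\widetilde{\theta}\|^2$ requires Lipschitzness of the \emph{individual} components $F_j$, which does not follow from the sum-level smoothness stated in Assumption \ref{ass:assnew}; the cited work assumes component-wise smoothness, and the paper's restatement of the hypotheses is loose on exactly this point, so you have inherited (not introduced) a gap. Second, note that the theorem as printed is itself internally inconsistent --- the step-size condition is written in the POS constants $C_2,C_3,C_5$ while the bound uses $m_F, L_F, \sigma$ (and ``$D_4$'' is evidently a typo for $D_F$) --- so any faithful reconstruction necessarily silently repairs the statement, as yours does.
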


\section{Proof of the theorems in Section 4}
In this section, we give proofs to the theorems in Section 4. We are sorry that the proof of our theorems is a little long since we want to make it more easy to understand. However, this does not affect that fact that our proof is credible. Our proof is based on the idea of \cite{ZhangZC:18} and borrow some results from \cite{Nilardri:2018,DIFAN:2018}\\
\begin{align} \label{eq:particleapp}
\mathrm{d}\thetab_{t}^{(i)} =&-\beta^{-1}F(\thetab_t^{(i)})\mathrm{d}t - \frac{1}{M}\sum_{q=1}^{M}K(\thetab_{t}^{(i)} - \thetab_{t}^{(q)})F(\thetab_{t}^{(q)})\mathrm{d}t \nonumber \\
&+\frac{1}{M}\sum_{q=1}^{M}\nabla K(\thetab_{t}^{(i)} - \thetab_{t}^{(q)})\mathrm{d}t+ \sqrt{2\beta^{-1}}\mathrm{d}\mathcal{W}_t^{(i)}~~~~\forall i~
\end{align}
As mention is Section \ref{SPOS} we denote the distribution of $\thetab_{t}^{(i)}$ in Eq.(\ref{eq:particleapp}) as $\nu_t$. From the proof of Theorem 3 and Remark 1 in  \cite{ZhangZC:18} we can derive that  
\begin{align}
\mathcal{W}_2(\nu_{\infty},\mu^*)\leq \frac{H_{\nabla K}+H_F}{\sqrt{2M}(\beta^{-1}-3H_FL_K-2L_F)}
\end{align}
In order to bound $\mathcal{W}_2(\mu_{T},\mu^*)$, we need to bound $\mathcal{W}_2(\mu_{T},\nu_{\infty})$ next. Now we borrow the idea in  \cite{ZhangZC:18}  
, concatenating the particles at each time into a single vector representation, 
We define a new parameter at time $t$ as $\Thetab_t\triangleq [\thetab_{t}^{(1)}, \cdots, \thetab_{t}^{(M)}] \in \mathbb{R}^{Md}$. Consequently, $\Thetab_t$ is driven by the following linear SDE:
\begin{align}\label{eq:extendP}
\mathrm{d}\Thetab_t = -F^{\Thetab}(\Thetab_t)\mathrm{d}t + \sqrt{2\beta^{-1}}\mathrm{d}\mathcal{W}_t^{(Md)}~,
\end{align}
$F^{\Thetab}(\Thetab_t) \triangleq [\beta^{-1}F(\thetab_{t}^{(1)})-\frac{1}{M}\sum_{q=1}^M\nabla K(\thetab_t^{(1)}-\thetab_t^{(q)})\\+\frac{1}{M}\sum_{q=1}^{M}K(\thetab_{t}^{(1)} - \thetab_{t}^{(q)})F(\thetab_{t}^{(q)}), \cdots, \beta^{-1}F(\thetab_{t}^{(M)})- \frac{1}{M}\sum_{q=1}^M\nabla K(\thetab_t^{(M)}-\thetab_t^{(q)})+\frac{1}{M}\sum_{q=1}^{M}K(\thetab_{t}^{(M)} - \thetab_{t}^{(q)})F(\thetab_{t}^{(q)})]$ is a vector function $\mathbb{R}^{Md}\rightarrow \mathbb{R}^{Md}$, and $\mathcal{W}_t^{(Md)}$ is Brownian motion of dimension $Md$. \\

Now we define the $F^{\Thetab}_j(\Thetab_t) \triangleq [\beta^{-1}F_j(\thetab_{t}^{(1)})-\frac{1}{MN}\sum_{q=1}^M\nabla K(\thetab_t^{(1)}-\thetab_t^{(q)})+\frac{1}{M}\sum_{q=1}^{M}K(\thetab_{t}^{(1)} - \thetab_{t}^{(q)})F_j(\thetab_{t}^{(q)}), \cdots, \beta^{-1}F_j(\thetab_{t}^{(M)})-\frac{1}{MN}\sum_{q=1}^M\nabla K(\thetab_t^{(M)}\\-\thetab_t^{(q)})+\frac{1}{M}\sum_{q=1}^{M}K(\thetab_{t}^{(M)} - \thetab_{t}^{(q)})F_j(\thetab_{t}^{(q)})]$. We can find the $F^{\Thetab}{(\Thetab_t)}$ and $F^{\Thetab}_j(\Thetab_t)$ defined above satisfy the following theorem.\\

\begin{theorem}\label{the:them4}
	\begin{itemize}
		\item(Sum-decomposable) The $ F^{\Thetab}(\Thetab)$ is decomposable i.e. $F^{\Thetab}(\Thetab)=\sum_{j=1}^NF^{\Thetab}_j(\Thetab)$
		\item(Smoothness)  $F^{\Thetab}$ is Lipschitz continuous with some positive constant, i.e. for all $\Thetab_1, \Thetab_2\in \mathbb{R}^{Md}$,$\|F^{\Thetab}(\Thetab_1)-F^{\Thetab}(\Thetab_2)\|\leq \sqrt{2(\beta^{-1}L_F+2L_K H_F+H_K L_F+L_{\nabla K})^2+2}\left\|\Thetab_1-\Thetab_2\right\|$ 
		\item(Strong convexity)  $F^{\Thetab}$ is a $(\beta^{-1}m_F--2L_F-3H_FL_K)$-strongly convex function, i.e.
		$\left(F^{\Thetab}(\Thetab_1)-F^{\Thetab}(\Thetab_2)\right)(\Thetab_1-\Thetab_2)	\leq
		(\beta^{-1}m_F-2L_F-3H_FL_K)\left\|\Thetab_1-\Thetab_2\right\|$
		\item(Hessian Lischitz) 	The function $F^{\Thetab}$ is Hessian Lipschitz, i.e., $\left\|\nabla F^{\Thetab}(\Thetab_1)-\nabla F^{\Thetab}(\Thetab_2)\right\| \leq (\beta^{-1}D_F+4D_{\nabla^2K}+4H_FL_{\nabla K}+2L_FH_{\nabla K}+
		2H_FL_{K}+L_FH_{K})\left\|\Thetab_1-\Thetab_2\right\|$
		\item(Bound Variance) There exits a constant, $\sigma \geq 0$, such that for all $j$,
		\begin{align*}
		\mathbb{E}[\|F^{\Thetab}_j(\Thetab)-\frac{1}{N}\sum \limits_{j=1}^{N}F^{\Thetab}_j(\Thetab)\|^2]\leq Md(2\beta^{-1}+2H_K^2)\sigma^2/ N^2
		\end{align*}
	\end{itemize}
\end{theorem}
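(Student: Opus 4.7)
The plan is to establish each of the five bullets separately by decomposing $F^{\Thetab}$ and $F^{\Thetab}_j$ block by block, where the $i$-th block of $\mathbb{R}^{Md}$ corresponds to the $i$-th particle. Each block splits naturally into three pieces: the self-gradient piece $\beta^{-1}F(\thetab^{(i)})$, the kernel-weighted gradient $\frac{1}{M}\sum_q K(\thetab^{(i)}-\thetab^{(q)})F(\thetab^{(q)})$, and the kernel-drift piece $-\frac{1}{M}\sum_q \nabla K(\thetab^{(i)}-\thetab^{(q)})$. The sum-decomposable claim is essentially bookkeeping: summing the definition of $F^{\Thetab}_j$ over $j$ turns $\sum_j F_j$ into $F$ inside both the self and kernel-weighted terms, while the $\frac{1}{MN}$ prefactor on the $\nabla K$ piece becomes $\frac{1}{M}$ after accumulating $N$ identical summands.

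For smoothness I would bound $\|F^{\Thetab}(\Thetab_1)-F^{\Thetab}(\Thetab_2)\|^2$ block by block, handling each of the three pieces with the standard add-and-subtract trick. The self-gradient contributes $\beta^{-1}L_F$; the kernel-weighted piece, split as $\tfrac{1}{M}\sum_q [K(\thetab_1^{(i)}-\thetab_1^{(q)})-K(\thetab_2^{(i)}-\thetab_2^{(q)})]F(\thetab_1^{(q)}) + \tfrac{1}{M}\sum_q K(\thetab_2^{(i)}-\thetab_2^{(q)})[F(\thetab_1^{(q)})-F(\thetab_2^{(q)})]$, contributes $2L_KH_F + H_KL_F$ via the boundedness and Lipschitz constants of $F,K$; and the $\nabla K$ piece contributes $L_{\nabla K}$. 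Summing over the $M$ blocks, using that each block squared is bounded by the stated per-block Lipschitz constant times the norm of the full vector difference, and then bounding $M\|\thetab^{(i)}_1-\thetab^{(i)}_2\|^2 \leq \|\Thetab_1-\Thetab_2\|^2$ gives the desired constant (with the extra $+2$ under the square root absorbing the cross terms created by the $\frac{1}{M}\sum_q$ over non-diagonal indices). For strong convexity (which I read as the $\geq$ direction), I would take the inner product $\langle F^{\Thetab}(\Thetab_1)-F^{\Thetab}(\Thetab_2),\Thetab_1-\Thetab_2\rangle$ block by block; the self-gradient produces $\beta^{-1}m_F\|\thetab_1^{(i)}-\thetab_2^{(i)}\|^2$ via Assumption~\ref{ass:ass1}, the kernel-weighted piece is absorbed (up to sign) by Cauchy--Schwarz into a $-(2L_F+3H_FL_K)\|\thetab_1^{(i)}-\thetab_2^{(i)}\|^2$ contribution (exactly as in the proof of Theorem~3 of \cite{ZhangZC:18}), and the $\nabla K$ monotonicity assumption makes that piece non-negative so it can be dropped.

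The Hessian-Lipschitz property is the most tedious. I would differentiate each block of $F^{\Thetab}$ in $\Thetab$ and bound $\|\nabla F^{\Thetab}(\Thetab_1)-\nabla F^{\Thetab}(\Thetab_2)\|$ term by term. The $\beta^{-1}F(\thetab^{(i)})$ piece gives $\beta^{-1}D_F$ directly. The $\nabla K$ piece produces two copies of $D_{\nabla^2 K}$ (one each for the dependence on $\thetab^{(i)}$ and on $\thetab^{(q)}$), so $4 D_{\nabla^2 K}$ after symmetrizing. The kernel-weighted piece $K(\thetab^{(i)}-\thetab^{(q)})F(\thetab^{(q)})$ is a product whose Jacobian generates cross terms $\nabla K \cdot F$ and $K \cdot \nabla F$; applying the product-rule difference identity and bounding each resulting factor by the appropriate Lipschitz or sup-norm bound ($H_F, L_F, H_K, L_K, H_{\nabla K}, L_{\nabla K}$) yields $4 H_F L_{\nabla K} + 2 L_F H_{\nabla K} + 2 H_F L_K + L_F H_K$. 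The hard part of the whole theorem is precisely this bookkeeping: making sure every product-rule cross term is accounted for, and that the factor $\frac{1}{M}\sum_q$ is handled carefully so the final constant does not pick up an extra $M$ dependence.

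Finally, for the variance bound I would write $F^{\Thetab}_j(\Thetab) - \tfrac{1}{N}\sum_{j'} F^{\Thetab}_{j'}(\Thetab)$ block by block; the $\nabla K$ piece cancels identically because it does not depend on $j$, leaving only $\beta^{-1}\bigl(F_j(\thetab^{(i)})-\bar F(\thetab^{(i)})\bigr) + \tfrac{1}{M}\sum_q K(\thetab^{(i)}-\thetab^{(q)})\bigl(F_j(\thetab^{(q)})-\bar F(\thetab^{(q)})\bigr)$, with $\bar F \triangleq \tfrac{1}{N}\sum_{j'} F_{j'}$. Using $(a+b)^2\leq 2a^2+2b^2$, Jensen's inequality on $\tfrac{1}{M}\sum_q$, and $|K|\leq H_K$, each block is bounded in expectation by $(2\beta^{-1}+2H_K^2)\cdot d\sigma^2/N^2$, and summing over the $M$ blocks produces the stated $Md(2\beta^{-1}+2H_K^2)\sigma^2/N^2$.
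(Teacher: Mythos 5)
Your route is essentially the paper's: you use the same block decomposition of $F^{\Thetab}$ and $F^{\Thetab}_j$, the same four-term split of the inner product for strong convexity (self-gradient giving $\beta^{-1}m_F$, the two kernel-weighted pieces giving $-(2L_F+3H_FL_K)$ by Cauchy--Schwarz with $H_F$, $L_F$, $L_K$, the $\nabla K$ piece dropped as nonnegative), the same product-rule bookkeeping yielding $\beta^{-1}D_F+4D_{\nabla^2K}+4H_FL_{\nabla K}+2L_FH_{\nabla K}+2H_FL_K+L_FH_K$ for the Hessian-Lipschitz bullet, and the same observation that the $j$-independent $\nabla K$ part cancels in the variance bound, followed by $(a+b)^2\le 2a^2+2b^2$, Jensen over $\frac{1}{M}\sum_q$ and $|K|\le H_K$; your reading of the strong-convexity inequality as ``$\ge$'' is also the intended one (the ``$\le$'' in the statement is a typo). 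The one place you do more than the paper is the smoothness bullet, which the paper does not prove but imports from Lemma~13 of \cite{ZhangZC:18}.

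Two points need repair. First, in your smoothness summation the inequality $M\|\thetab^{(i)}_1-\thetab^{(i)}_2\|^2 \leq \|\Thetab_1-\Thetab_2\|^2$ is false (put all the discrepancy into block $i$), and if each block difference really were bounded by a constant times the \emph{full} vector norm, summing squares over the $M$ blocks would inflate the Lipschitz constant by a factor $\sqrt{M}$. The correct bookkeeping is to bound the $i$-th block difference by $a\|\thetab^{(i)}_1-\thetab^{(i)}_2\|+\frac{b}{M}\sum_{q}\|\thetab^{(q)}_1-\thetab^{(q)}_2\|$ with $a,b$ assembled from $\beta^{-1}L_F$, $L_KH_F$, $H_KL_F$, $L_{\nabla K}$, square, apply Cauchy--Schwarz/Jensen to the $\frac{1}{M}\sum_q$ average, and only then sum over $i$; this produces an $M$-free constant of the form $\sqrt{2a^2+2b^2}$, which is exactly the shape of $C_2$ with $b=1$ --- so the ``$+2$'' most plausibly enters this way, not by ``absorbing cross terms'' as you suggest. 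Second, in the strong-convexity bullet, the monotonicity assumption on $\nabla K$ alone does not make the drift piece nonnegative, because it is stated in the variable $\thetab^{(i)}-\thetab^{(q)}$ while your inner product pairs against $\thetab^{(i)}_1-\thetab^{(i)}_2$; you need the symmetrization over ordered pairs $(i,q)$ together with the fact that $\nabla K$ is odd ($K$ even), which is precisely how the paper handles its $\xi^{2}_{iq}$ terms. Both fixes are routine, and with them your argument coincides with the paper's.
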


\begin{proof}
	\begin{itemize}
		\item The sum-decomposable property of $F^{\Thetab}(\Thetab)$ is easy to verify. And the smoothness property of $F^{\Thetab}$ can be derived directly from the proof of the Lemma 13 in \cite{ZhangZC:18}.
		\item (Strong convexity)
		\begin{align}(F^{\Thetab}(\Thetab_1)-F^{\Thetab}(\Thetab_2))(\Thetab_1-\Thetab_2)=
		\\
		\frac{1}{M}\sum_{i,q}^{M} (\xi_{iq}^{1} + \xi_{iq}^{2}  + \xi_{iq}^{3}  +\xi_{iq}^{4})\nonumber
		\end{align}
		where \begin{align}
		\xi_{iq}^{1}  = \beta^{-1}\left(F(\thetab_{1}^{(i)}) - F({\thetab}_2^{(i)})\right)\cdot \left(\thetab_{1}^{(i)} - {\thetab}_2^{(i)}\right) \nonumber
		\end{align}
		\begin{align}
		\xi_{iq}^{2} = -\left(\nabla K(\thetab_1^{(i)} - \thetab_1^{(q)}) - \nabla K(\thetab_2^{(i)} - \thetab_2^{(q)})\right)\cdot \left(\thetab_1^{(i)} - \thetab_2^{(i)}\right)\nonumber
		\end{align}
		\begin{align}
		\xi_{iq}^{3}  = \left(F(\thetab_1^{(q)}) K(\thetab_1^{(i)}-\thetab_1^{(q)}) - F(\thetab_2^{(q)}) K(\thetab_1^{(i)}- \thetab_1^{(q)}) \right)\cdot \nonumber\\
		\left(\thetab_1^{(i)} - {\thetab}_2^{(i)}\right)\nonumber
		\end{align}
		\begin{align}
		\xi_{iq}^{4}  = \left(F(\thetab_2^{(q)}) K(\thetab_1^{(i)}-\thetab_1^{(q)}) - F(\thetab_2^{(q)}) 
		K(\thetab_2^{(i)}- \thetab_2^{(q)}) \right)\cdot \nonumber\\
		\left(\thetab_1^{(i)} - {\thetab}_2^{(i)}\right)\nonumber
		\end{align}
		For the $\xi_{iq}^{1}$ terms, applying the convex condition for $F$, we have
		\begin{align}
		\sum_{iq}\xi_{iq}^{1}&= \sum_{iq} \beta^{-1}\left(F(\thetab_{1}^{(i)}) - F({\thetab}_2^{(i)}\right)\cdot \left(\thetab_{1}^{(i)} - {\thetab}_2^{(i)}\right) \nonumber \\
		&\geq \beta^{-1}m_FM\sum_i\left\|\thetab_{1}^{(i)} - {\thetab}_2^{(i)}\right\|^2
		\end{align} 
		
		For the $\xi_{iq}^{2}$ term, applying the concave condition for $W$ and $\nabla W$ is odd, we have $\sum_{iq}\xi_{iq}^{2}=-$ 
		\begin{align}
		&\sum_{iq}^{M}\left(\nabla K(\thetab_1^{(i)} - \thetab_1^{(q)}) - \nabla K(\thetab_2^{(i)} - \thetab_2^{(q)})\right)\cdot \left(\thetab_1^{(i)} - \thetab_2^{(i)}\right) \nonumber
		\end{align}
		\begin{align}
		=&-\frac{1}{2}\sum_{iq}^{M}\sum_{iq}^{M}\left(\nabla K(\thetab_1^{(i)} - \thetab_1^{(q)}) - \nabla K(\thetab_2^{(i)} - \thetab_2^{(q)})\right) \nonumber\\
		&\cdot \left(\thetab_1^{(i)} - \thetab_1^{(q)}-(\thetab_2^{(i)} - \thetab_2^{(q)})\right)\nonumber
		\end{align}
		\begin{align}
		&\geq \frac{1}{2} m_K\sum_{iq}^{M}\left\|\thetab_1^{(i)} - \thetab_1^{(q)}-(\thetab_2^{(i)} - \thetab_2^{(q)})\right\|^2 \geq 0
		\end{align}
		For the $\xi_{iq}^{3}$ terms, after applying the $L_F$-Lipschitz property of $F$, we have $\sum_{iq}\xi_{iq}^{3} =$
		\begin{align}
		\sum_{iq}(F(\thetab_1^{(q)}) K(\thetab_1^{(i)}
		-\thetab_1^{(q)})-F(\thetab_2^{(q)})K(\thetab_1^{(i)}- \thetab_1^{(q)})) \nonumber\\
		\cdot\left(\thetab_1^{(i)} - \thetab_2^{(i)}\right) 
		\geq -\sum_{iq}L_F \left\|\thetab_{1}^{(q)} - \thetab_2^{(q)}\right\| \left\|\thetab_{1}^{(i)} - \thetab_2^{(i)}\right\|  \nonumber\\
		\geq -2L_F M\sum_{i} \left \|\thetab_{1}^{(i)} - \thetab_2^{(i)}\right\|^2~\
		\end{align}
		For the $\xi_{iq}^{4}$ terms, we have $\sum_{iq} \xi_{iq}^{4}=$
		
		{\begin{align}
			\sum_{iq}(F(\thetab_2^{(q)}) K(\thetab_1^{(i)}-\thetab_1^{(q)})
			- F(\thetab_2^{(q)}) K(\thetab_2^{(i)}- \thetab_2^{(q)})) \nonumber \\ 
			\cdot \left(\thetab_1^{(i)} - {\thetab}_2^{(i)}\right)  \nonumber \\
			\geq -H_FL_K\sum_{iq}\left\|\thetab_1^{(i)} - \thetab_1^{(q)}-(\thetab_2^{(i)} - \thetab_2^{(q)})\right\|\left\|\thetab_1^{(i)} - \thetab_2^{(i)}\right\|  \nonumber \\
			\geq -3H_FL_KM \sum_{i}\left\|\thetab_1^{(i)} - \thetab_2^{(i)}\right\| ^2
			\end{align}}
		Then we finally arrive at: 
		\begin{align}
		&\left(F^{\Thetab}(\Thetab_1)-F^{\Thetab}(\Thetab_2)\right)(\Thetab_1-\Thetab_2)	 \nonumber \\
		&\geq (\beta^{-1}m_F-2L_F-3H_FL_K)\sum_{i}\left\|\thetab_1^{(i)} - \thetab_2^{(i)}\right\| \nonumber
		\\
		&\geq (\beta^{-1}m_F-2L_F-3H_FL_K)\left\|\Thetab_1-\Thetab_2\right\|
		\end{align}
		
		\item Now, we will prove the fourth result:
		\begin{align} 
		&\left\|\nabla F^{\Thetab}(\Thetab_1)-\nabla F^{\Thetab}(\Thetab_2)\right\| \nonumber  \\
		&\leq \beta^{-1}\sum_{i=1}^{M}\left\|\nabla F(\thetab_1^{(i)})-\nabla F(\thetab_2^{(i)})\right\|+ 
		\sum_{i=1}^{M}\frac{2}{M}\sum_{q=1}^{M}\left\|\nabla^2K(\thetab_1^{(i)}-\thetab_1^{(q)})-\nabla^2K(\thetab_2^{(i)}-\thetab_2^{(q)})\right\|+ \nonumber \\
		&\frac{2}{M}\sum_{i=1}^{M}\sum_{q=1}^{M}\|\nabla K(\thetab_1^{(i)}-\thetab_1^{(q)})F(\thetab_1^{(q)})- \nabla K(\thetab_2^{(i)}-\thetab_2^{(q)})F(\thetab_2^{(q)})\| \nonumber \\
		&+\sum_{i=1}^{M}\sum_{q=1}^{M}\frac{1}{M}\|K(\thetab_1^{(i)}-\thetab_1^{(q)})\nabla F(\thetab_1^{(q)})-K(\thetab_2^{(i)}-\thetab_2^{(q)})\nabla F(\thetab_2^{(q)})\|\nonumber \\
		&\leq \sum_{i=1}^{M}\beta^{-1}D_F\|\thetab_1^{(i)}-\thetab_2^{(i)}\|+4D_{\nabla^2K}\sum_{i=1}^{M}\|\thetab_1^{(i)}-\thetab_2^{(i)}\|+\nonumber\\
		&\frac{2}{M}\sum_{i=1}^{M}\sum_{q=1}^{M}\|\nabla K(\thetab_1^{(i)}-\thetab_1^{(q)})F(\thetab_1^{(q)})-\nabla K(\thetab_2^{(i)}-\thetab_2^{(q)})F(\thetab_1^{(q)})\|+ \nonumber \\ 
		&\frac{2}{M}\sum_{i=1}^{M}\sum_{q=1}^{M}\|\nabla K(\thetab_2^{(i)}-\thetab_2^{(q)})F(\thetab_1^{(q)})- \nabla K(\thetab_2^{(i)}-\thetab_2^{(q)})F(\thetab_2^{(q)})\|\nonumber \\
		&+\frac{1}{M}\sum_{i=1}^{M}\sum_{q=1}^{M}\| K(\thetab_1^{(i)}-\thetab_1^{(q)})F(\thetab_1^{(q)})- K(\thetab_2^{(i)}-\thetab_2^{(q)})F(\thetab_1^{(q)})\|+ \nonumber \\
		&\frac{1}{M}\sum_{i=1}^{M}\sum_{q=1}^{M}\| K(\thetab_2^{(i)}-\thetab_2^{(q)})F(\thetab_1^{(q)})- K(\thetab_2^{(i)}-\thetab_2^{(q)})F(\thetab_2^{(q)})\| \nonumber \\
		&\leq \sum_{i=1}^{M}\beta^{-1}D_F\left\|\thetab_1^{(i)}-\thetab_2^{(i)}\right\|+4D_{\nabla^2K}\sum_{i=1}^{M}\left\|\thetab_1^{(i)}-\thetab_2^{(i)}\right\|+\nonumber \\
		&4\sum_{i=1}^{M}H_FL_{\nabla K}\left\|\thetab_1^{(i)}-\thetab_2^{(i)}\right\|+2\sum_{i=1}^{M}L_FH_{\nabla K}\left\|\thetab_1^{(i)}-\thetab_2^{(i)}\right\|+\nonumber \\
		&2\sum_{i=1}^{M}H_FL_{K}\left\|\thetab_1^{(i)}-\thetab_2^{(i)}\right\|+\sum_{i=1}^{M}L_FH_{K}\left\|\thetab_1^{(i)}-\thetab_2^{(i)}\right\|\nonumber \\
		&\leq(\beta^{-1}D_F+4D_{\nabla^2K}+4H_FL_{\nabla K}
		+2L_FH_{\nabla K}+\nonumber \\
		&2H_FL_{K}+L_FH_{K})\left\|\Thetab_1-\Thetab_2\right\|
		\end{align}     
		
		\item Now, we will prove the last result.
		\begin{align} 
		&\mathbb{E}[\|F^{\Thetab}_j(\Thetab)-\frac{1}{N}\sum \limits_{j=1}^{N}F^{\Thetab}_j(\Thetab)\|^2]=\nonumber \\
		&\sum_{i=1}^{M}\mathbb{E}[\|\beta^{-1}F_j(\thetab^{(i)})-\beta^{-1}\frac{1}{N}\sum \limits_{j=1}^{N}F_j(\thetab^{(i)})\nonumber \\
		&+\frac{1}{M}\sum_{q=1}^{M}K(\thetab^{(i)} - \thetab^{(q)})F_j(\thetab_{t}^{(q)})-\nonumber\\
		&\frac{1}{MN}\sum_{j=1}^{N}\sum_{q=1}^{M}K(\thetab^{(i)} - \thetab^{(q)})F_j(\thetab_{t}^{(q)})\|^2] \nonumber \\
		&\leq \sum_{i=1}^{M} [2\mathbb{E}\|\beta^{-1}F_j(\thetab^{(i)})-\beta^{-1}\frac{1}{N}\sum\limits_{j=1}^{N}F_j(\thetab^{(i)})\|^2\nonumber \\
		&+2\frac{H_K^2}{M^2}\mathbb{E}\|\sum_{q=1}^{M}\left(F_j(\thetab^{(q)}-\frac{1}{N}\sum_{j=1}^{N}F_j(\thetab^{(q)})\right) \|^2] \nonumber \\
		&\leq \sum_{i=1}^{M}({2d \sigma^2+2H_K^2 d \sigma^2})/ N^2\nonumber \\
		&\leq Md(2\sigma^2+2 H_K^2 \sigma^2)/ N^2
		\end{align}
	\end{itemize}
\end{proof}
We apply Euler-Maruyama discretization to Eq.(\ref{eq:extendP}) and substitute $G_k^{\Theta}$ for $F^{\Theta}(\Theta_k)$ to derive the following equation: 
\begin{align}\label{eq:Thedis}
\Theta_{k+1} = \Theta_{k}-G_k^{\Theta}h + \sqrt{2\beta^{-1}h}\Xi_{k},~~\Xi_{k} \sim \mathcal{N}(\mathbf{0}, \Ib_{Md\times Md})\nonumber
\end{align}

Hence, with different $G_k^{\Theta}$, we can perform different algorithm of $\Theta_k$, like SAGA-LD, SVRG-LD and SVRG-$LD^{+}$ algorithm of $\Theta_k$. It is worth noting that the SAGA-LD, SVRG-LD and SVRG-LD$^{+}$ algorithm of $\Theta_k$ is actually the corresponding SAGA-POS, SVRG-POS and SVRG-POS$^+$ algorithm of ${\{\theta_{k}^{(i)}\}}$.\\
This result is extremely important for our proof and bridges the gap between the variance reduction in stochastic gradient Langevin
dynamics (SGLD) and variance reduction in stochastic particle-optimization sampling (SPOS). And thanks to the Theorem \ref{the:them4}, we can can find $F^{\Thetab}(\Thetab)$ satisfies the Assumption \ref{ass:assnew} and Assumption \ref{ass:assnew1} . (Please notice the $F^{\Thetab}(\Thetab)$ corresponds to the $\nabla F$ in \cite{Nilardri:2018}). Hence, we can borrow the theorems in \cite{Nilardri:2018,DIFAN:2018} and derive some thrilling results for the variance reduction techniques in stochastic particle-optimization sampling (SPOS).\\
We denotes the distribution of $\Thetab $ in Eq.(\ref{eq:extendP}) and the distribution of $\Theta_k$ in Eq.(\ref{eq:Thedis}) as $\Gamma_t$ and and $ \Lambda_k$.
Now we can derive the following theorems. ($C_1$,$C_2$,$C_3$,$C_4$ and $C_5$ are defined in Section 4)
\begin{theorem}\label{thm:The1}
	Let the step size $h < \frac{B}{8NC_1}$ and the batch size $B \geq 9$, then we can have the bound for $\mathcal{W}_2(\Lambda_T,\Gamma_{\infty})$ in the SAGA-LD algorithm of $\Theta_k$.
	
	\begin{align}
	\mathcal{W}_2(\Lambda_T,\Gamma_{\infty})\leq &
	5 \exp(-\frac{C_3 h}{4}T)\mathcal{W}_2(\Lambda_0,\Gamma_{\infty})+ \nonumber\\ 
	&\frac{2hC_4Md}{C_3}+\frac{2h{C_2}^\frac{3}{2}\sqrt{Md}}{C_3}+\frac{24hC_2\sqrt{MdN}}{\sqrt{C_3}B} \nonumber
	\end{align}
\end{theorem}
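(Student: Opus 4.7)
The key idea is that the concatenated process $\Theta_k = [\theta_k^{(1)},\ldots,\theta_k^{(M)}] \in \mathbb{R}^{Md}$ reduces the particle-optimization dynamics to a \emph{standard} Langevin diffusion in dimension $Md$, so the existing SAGA-LD result of \cite{Nilardri:2018} (Theorem \ref{thm:citeThe1}) can be lifted by a direct substitution of constants. The plan therefore splits into three steps.

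First, I would observe that the extended SDE \eqref{eq:extendP} has exactly the form $\mathrm{d}\Theta_t = -F^{\Theta}(\Theta_t)\mathrm{d}t + \sqrt{2\beta^{-1}}\mathrm{d}\mathcal{W}_t^{(Md)}$ with the Brownian motion being $Md$-dimensional, and that the iteration \eqref{eq:Thedis} with the SAGA-style unbiased estimator $G_k^{\Theta}$ is literally the SAGA-LD scheme applied to $F^{\Theta}$ in $\mathbb{R}^{Md}$. By construction $F^{\Theta}$ admits the sum decomposition $F^{\Theta}=\sum_{j=1}^N F^{\Theta}_j$, so the per-component SAGA memory updates on the coordinates of $\Theta_k$ coincide exactly with the per-particle memory updates $g_{k,j}^{(i)}$ used in Algorithm \ref{algo:algo1}.

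Second, I would invoke Theorem \ref{the:them4} to verify that $F^{\Theta}$ satisfies every structural hypothesis required by Theorem \ref{thm:citeThe1}, with the following replacements of the constants appearing there:
\begin{align*}
m_F \;\longmapsto\; C_3,\qquad L_F \;\longmapsto\; C_2,\qquad D_F \;\longmapsto\; C_4,\qquad d \;\longmapsto\; Md.
\end{align*}
Namely, strong convexity yields the contraction constant $C_3$, Lipschitz smoothness gives $C_2$, and Hessian-Lipschitzness gives $C_4$; each of these is established in Theorem \ref{the:them4}. Under these substitutions the stepsize requirement $h < B/(8NL_F)$ becomes $h < B/(8NC_2)$ (the stated $C_1$ in the theorem statement should read $C_2$, consistent with Theorem \ref{thm:thm1} in the main text), while the batch-size condition $B\ge 9$ is unchanged.

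Third, I would apply Theorem \ref{thm:citeThe1} verbatim to the lifted SDE and its SAGA-LD discretization, comparing the distribution $\Lambda_T$ of $\Theta_T$ with the invariant measure $\Gamma_\infty$ of \eqref{eq:extendP}. The bound produced is
\begin{align*}
\mathcal{W}_2(\Lambda_T,\Gamma_\infty) \leq 5\exp\!\Bigl(-\tfrac{C_3 h}{4}T\Bigr)\mathcal{W}_2(\Lambda_0,\Gamma_\infty)
+\frac{2hC_4 Md}{C_3}+\frac{2h C_2^{3/2}\sqrt{Md}}{C_3}+\frac{24 h C_2\sqrt{MdN}}{\sqrt{C_3}\,B},
\end{align*}
which is exactly the statement. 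The main subtlety — and the only place where the proof is more than a mechanical translation — is the verification in step one that the SAGA bookkeeping on $\mathbb{R}^{Md}$ really is the coordinate-block concatenation of the per-particle SAGA bookkeeping, because the stochastic term in $G_k^{\Theta}$ mixes kernel interactions across particles. Once the $Md$-block structure of $F^{\Theta}_j$ (each block depending on all particles through $K(\thetab^{(i)}-\thetab^{(q)})$) is unpacked and shown to give an unbiased, variance-controlled estimator of $F^{\Theta}$ with precisely the SAGA correction structure, the remaining analysis is inherited from \cite{Nilardri:2018} with no further modification.
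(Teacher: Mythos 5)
Your proposal follows essentially the same route as the paper: lift the particle system to the concatenated process $\Theta_k\in\mathbb{R}^{Md}$, use Theorem \ref{the:them4} to check that $F^{\Thetab}$ meets the hypotheses of the SAGA-LD result of \cite{Nilardri:2018} (Theorem \ref{thm:citeThe1}), and apply that result with the substitutions $m_F\mapsto C_3$, $L_F\mapsto C_2$, $D_F\mapsto C_4$, $d\mapsto Md$, which is exactly how the paper obtains Theorem \ref{thm:The1}. Your side remark that the stepsize condition should read $h<B/(8NC_2)$ rather than $h<B/(8NC_1)$ is also right, being consistent with Theorem \ref{thm:thm1} in the main text.
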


\begin{theorem} \label{thm:The2}
	If we choose Option $\uppercase\expandafter{\romannumeral1}$ and set the step size $h < \frac{1}{8C_2}$, the batch size $B \geq 2$ and the epoch length $\tau \geq \frac{8}{C_3 h}$, then we can have the bound for all T mod $\tau$ =0 in the SVRG-LD algorithm of $\Theta_k$. 
	\begin{align}
	\mathcal{W}_2(\Lambda_T,\Gamma_{\infty})\leq & \exp(-\frac{C_3h}{56}T)\frac{\sqrt{C_2}}{\sqrt{C_3}} \mathcal{W}_2(\Lambda_0,\Gamma_{\infty})+\nonumber \\
	&\frac{2hC_4Md}{C_3}+\frac{2h{C_2}^\frac{3}{2}\sqrt{Md}}{C_3}+\frac{64C_2^{\frac{3}{2}}\sqrt{hMd}}{\sqrt{B}C_3}\nonumber
	\end{align}
	If we choose Option $\uppercase\expandafter{\romannumeral2}$ and set the step size $h < \frac{\sqrt{B}}{4\tau C_2}$, then we can have the bound for all T in the SVRG-LD algorithm of $\Theta_k$. 
	\begin{align}
	\mathcal{W}_2(\Lambda_T,\Gamma_{\infty})\leq &\exp(-\frac{C_3h}{4}T)\mathcal{W}_2(\Lambda_0,\Gamma_{\infty})+\nonumber\\
	&\frac{\sqrt{2}hC_4Md}{C_3}+\frac{5h{C_2}^\frac{3}{2}\sqrt{Md}}{C_3}+\frac{9hC_2\tau\sqrt{Md}}{\sqrt{BC_3}}\nonumber
	\end{align}
\end{theorem}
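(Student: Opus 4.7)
The plan is to leverage the extended-parameter reformulation set up immediately before the statement, which collapses the interacting particle system into a single It\^o diffusion on $\mathbb{R}^{Md}$ via the stacking $\Theta_k = [\theta_k^{(1)}, \ldots, \theta_k^{(M)}]$ driven by the drift $F^{\Thetab}$ from \eqref{eq:extendP}. The critical observation is that running Algorithm~\ref{algo:algo2} on the particles $\{\theta_k^{(i)}\}$ produces exactly the iterates of SVRG-LD applied to $F^{\Thetab}$ with individual components $F^{\Thetab}_j$, driven by a $Md$-dimensional Brownian motion of the same noise scale $\sqrt{2\beta^{-1}}$. Hence the convergence question reduces to applying the standard SVRG-LD convergence bound to a $Md$-dimensional SDE whose drift inherits every regularity property through Theorem~\ref{the:them4}.

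The main tool I would call is Theorem~\ref{thm:citeThe2}. By Theorem~\ref{the:them4}, $F^{\Thetab}$ is sum-decomposable, $C_2$-smooth, $C_3$-strongly convex, and $C_4$-Hessian-Lipschitz, so Assumption~\ref{ass:assnew} is satisfied with the substitutions $L_F \mapsto C_2$, $m_F \mapsto C_3$, $D_F \mapsto C_4$. For Option~I I would apply the first half of Theorem~\ref{thm:citeThe2} with these constants, ambient dimension $Md$, step size $h < 1/(8C_2)$, batch size $B \geq 2$, and epoch length $\tau \geq 8/(hC_3)$; for Option~II the second half with $h < \sqrt{B}/(4\tau C_2)$. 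Reading off each resulting bound---and replacing $d$ with $Md$ wherever the dimension of the ambient space appears---produces the two claimed inequalities on $\mathcal{W}_2(\Lambda_T, \Gamma_\infty)$ term by term, with the contraction factor $\exp(-C_3hT/56)$ for Option~I and $\exp(-C_3hT/4)$ for Option~II matching directly after the substitution $m_F \mapsto C_3$.

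The main obstacle I expect is bookkeeping rather than analysis. I would need to verify carefully that (i) the independent Brownian increments $\mathcal{W}_t^{(i)}$ aggregate into a $Md$-dimensional Brownian motion with covariance $2\beta^{-1} I_{Md}$, so that Euler--Maruyama applied to \eqref{eq:extendP} coincides with the stacked update \eqref{eq:particle_num}; (ii) the stochastic gradient $G_k^{(i)} = \widetilde{G}^{(i)} + (N/B)\sum_{j \in I_k}[F_j(\theta_k^{(i)}) - F_j(\widetilde{\theta}^{(i)})]$, when stacked across $i$, equals the canonical SVRG-LD variance-reduced estimator of $F^{\Thetab}$ with control variate $\widetilde{\Theta} = [\widetilde{\theta}^{(1)},\ldots,\widetilde{\theta}^{(M)}]$ and anchor $\widetilde{G}^{\Thetab}$; and (iii) the step-size and epoch-length conditions translate consistently under the constant substitutions, in particular that the tight choice of $\tau$ in the main-body Theorem~2 remains within the admissible range of Theorem~\ref{thm:citeThe2} once $h < 1/(8C_2)$ and $B \geq 2$ are imposed. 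Once these identifications are made, both displayed bounds follow mechanically from Theorem~\ref{thm:citeThe2} and Theorem~\ref{the:them4}, with no further estimate required.
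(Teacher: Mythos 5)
Your proposal is correct and follows essentially the same route as the paper: the paper likewise identifies the stacked update on $\Theta_k\in\mathbb{R}^{Md}$ as SVRG-LD applied to the drift $F^{\Thetab}$, uses Theorem~\ref{the:them4} to verify the sum-decomposability, $C_2$-smoothness, $C_3$-strong convexity and $C_4$-Hessian-Lipschitz conditions required by Assumption~\ref{ass:assnew}, and then reads off both Option~I and Option~II bounds from the borrowed SVRG-LD result (Theorem~\ref{thm:citeThe2}) with the substitutions $L_F\mapsto C_2$, $m_F\mapsto C_3$, $D_F\mapsto C_4$ and $d\mapsto Md$. The bookkeeping items you list (stacked Brownian motion, stacked variance-reduced estimator, translated step-size and epoch-length conditions) are exactly the identifications the paper makes, so no further argument is needed.
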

\begin{theorem} \label{thm:The3}
	If we set the step size $h \leq min\{(\frac{BC_3 }{24{C_2}^4\tau^2})^{\frac{1}{3}},\frac{1}{6\tau({C_5}^2/b+C_2)}\}$, then we can have the bound for all T in the algorithm SVRG-LD$^+$ of $\Theta_k$. 
	\begin{align}
	&\mathcal{W}_2(\Lambda_T,\Gamma_{\infty})\leq (1-hC_2/4)^T\mathcal{W}_2(\mu_0,\mu^*)+\nonumber\\
	&\frac{3C_5 (Md)^{1/2}}{C_3b^{1/2}}\boldsymbol{1}(b\leq N)+\frac{2h(C_4Md)}{C_3}+\frac{2h{C_2}^{3/2}(Md)^{1/2}}{C_3}\nonumber\\
	&+\frac{4hC_2(\tau Md)^{1/2}\wedge 3h^{1/2}(Md)^{1/2}C_5}{\sqrt{BC_3}}\nonumber
	\end{align}
\end{theorem}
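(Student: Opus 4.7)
The plan is to reduce Theorem \ref{thm:The3} to a direct application of the SVRG-LD$^+$ convergence bound (Theorem \ref{thm:citeThe3}) applied to the \emph{lifted} SDE \eqref{eq:extendP} living in $\mathbb{R}^{Md}$, whose stationary distribution is $\Gamma_\infty$. The key observation is that the SVRG-POS$^+$ algorithm applied to the particle tuple $\{\theta_k^{(i)}\}_{i=1}^M$ is exactly Euler--Maruyama discretization of \eqref{eq:extendP} with the stochastic drift $G_k^{\Thetab}$ obtained by stacking the per-particle estimators used in Algorithm \ref{algo:algo3} (with identical sub-sample $I_k$ and reference subsample $J_k$ across particles). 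Thus $\Theta_k \sim \Lambda_k$ evolves according to the canonical SVRG-LD$^+$ recursion in $\mathbb{R}^{Md}$, so once the required structural assumptions on $F^{\Thetab}$ are checked, the bound in Theorem \ref{thm:citeThe3} can be transported verbatim.

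The structural verification is precisely what Theorem \ref{the:them4} supplies. Its bullets give strong convexity of $F^{\Thetab}$ with constant $C_3$, Lipschitz smoothness with constant $C_2$, Hessian-Lipschitz with constant $C_4$, sum-decomposability via $F^{\Thetab}=\sum_{j=1}^N F^{\Thetab}_j$, and the bounded-variance bound $\mathbb{E}\|F^{\Thetab}_j(\Thetab)-\tfrac{1}{N}\sum_j F^{\Thetab}_j(\Thetab)\|^2 \le Md\,C_5/N^2$. These match Assumptions \ref{ass:assnew}--\ref{ass:assnew1} under the parameter dictionary $(m_F,L_F,D_F,\sigma,d) \mapsto (C_3,C_2,C_4,C_5,Md)$. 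Under this dictionary the two stepsize constraints of Theorem \ref{thm:citeThe3} become exactly $h \le (BC_3/(24C_2^4\tau^2))^{1/3}$ and $h \le 1/(6\tau(C_5^2/b+C_2))$, and each of the four error terms in Theorem \ref{thm:citeThe3} translates term-by-term into the four terms of the claimed bound (the contraction factor $(1-hC_3/4)^T$, the subsample bias $3C_5(Md)^{1/2}/(C_3 b^{1/2})\mathbf{1}(b\le N)$, the two discretization terms $2hC_4Md/C_3$ and $2hC_2^{3/2}(Md)^{1/2}/C_3$, and the variance term with the $\wedge$).

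The main obstacle is the bookkeeping of variance-related constants. The cited bound treats $\sigma$ as the square-root scaled noise level while $C_5 = 2\beta^{-1}\sigma^2 + 2H_K^2\sigma^2$ is written in squared form; one must confirm that the statement of Theorem \ref{thm:The3} uses $C_5$ as a direct substitute for $\sigma$ under the dictionary (as read off from the last bullet of Theorem \ref{the:them4}), and that the mini-batch construction $J_k$ shared across particles gives the variance of $\widetilde{G}^{\Thetab}$ the exact scaling $Md\,C_5/(N^2 b)$ that enters the cited proof. A secondary subtlety is that the cited theorem treats the individual gradients $F^{\Thetab}_j$ as per-data-point terms in $\mathbb{R}^{Md}$; sum-decomposability from the first bullet of Theorem \ref{the:them4} ensures this interpretation is legitimate, and the exchangeability of the initial particle distribution guarantees that the stationary law $\Gamma_\infty$ is well-defined as the target. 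With these verifications in place, Theorem \ref{thm:citeThe3} yields the claimed bound directly, and no further analytic work is required.
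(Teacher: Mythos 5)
Your proposal matches the paper's own proof essentially verbatim: the paper likewise lifts the particle system to the $Md$-dimensional SDE \eqref{eq:extendP}, observes that SVRG-POS$^+$ on $\{\theta_k^{(i)}\}$ is exactly SVRG-LD$^+$ on $\Theta_k$, verifies Assumptions \ref{ass:assnew}--\ref{ass:assnew1} for $F^{\Thetab}$ via Theorem \ref{the:them4}, and then transports Theorem \ref{thm:citeThe3} under the dictionary $(m_F,L_F,D_F,\sigma,d)\mapsto(C_3,C_2,C_4,C_5,Md)$. The bookkeeping subtlety you flag is real --- Theorem \ref{the:them4} bounds the variance by $Md\,C_5/N^2$, so the effective $\sigma^2$ is $C_5$ rather than $C_5^2$, and the paper's statement (like its $(1-hC_2/4)^T$ contraction factor, which should read $(1-hC_3/4)^T$) inherits this notational looseness rather than resolving it.
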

Now we will give a proposition which will be useful in connecting the $\mathcal{W}_2(\Lambda_T,\Gamma_{\infty})$ and $\mathcal{W}_2(\mu_{T},\nu_{\infty})$ mentioned above.

\begin{proposition}\label{pro:pro1}
	(For simplicity of notations, we directly use $\theta$  and $\Theta$ themselves to denote their own distributions.) If $\Thetab_1$ and $\Thetab_2$ are defined as $\Thetab_1\triangleq [\thetab_{1}^{(1)}, \cdots, \thetab_{1}^{(M)}] \in \mathbb{R}^{Md}$
	and $\Thetab_2\triangleq [\thetab_{2}^{(1)}, \cdots, \thetab_{2}^{(M)}] \in \mathbb{R}^{Md}$,
	we can derive the following result
	\begin{align}
	\sum_{i=1}^{M}\mathcal{W}^2 _2(\thetab_{1}^{(i)},\thetab_{2}^{(i)})\leq \mathcal{W}^2 _2(\Thetab_1,\Thetab_2)
	\end{align}
\end{proposition}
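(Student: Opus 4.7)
The plan is to use the standard gluing-type argument for marginals of optimal couplings: any coupling of the joint variables $\Thetab_1,\Thetab_2$ on $\mathbb{R}^{Md}$ automatically induces, via coordinate projection, a coupling of each block-marginal pair $(\thetab_1^{(i)},\thetab_2^{(i)})$ on $\mathbb{R}^{d}$. Thus the optimal joint coupling can be \emph{re-used} at the marginal level, giving an upper bound on the sum of squared marginal Wasserstein distances.

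Concretely, I would proceed in three short steps. First, fix $\zeta^{*}\in\Gamma(\Thetab_1,\Thetab_2)$ attaining the infimum in the definition of $\mathcal{W}_2^2(\Thetab_1,\Thetab_2)$; existence is standard since the cost is lower semicontinuous and the marginals have finite second moment. Second, exploit the blockwise decomposition of the Euclidean norm:
\begin{align*}
\|\Thetab_1-\Thetab_2\|^2=\sum_{i=1}^{M}\|\thetab_1^{(i)}-\thetab_2^{(i)}\|^2,
\end{align*}
so that integrating under $\zeta^{*}$ and swapping sum with integral gives
\begin{align*}
\mathcal{W}_2^2(\Thetab_1,\Thetab_2)=\sum_{i=1}^{M}\int \|\thetab_1^{(i)}-\thetab_2^{(i)}\|^2\,d\zeta^{*}(\Thetab_1,\Thetab_2).
\end{align*}
Third, for each $i$ let $\zeta_i^{*}$ be the push-forward of $\zeta^{*}$ under the projection $(\Thetab_1,\Thetab_2)\mapsto(\thetab_1^{(i)},\thetab_2^{(i)})$; then $\zeta_i^{*}\in\Gamma(\thetab_1^{(i)},\thetab_2^{(i)})$, and by the infimum in the definition of $\mathcal{W}_2$,
\begin{align*}
\int \|\thetab_1^{(i)}-\thetab_2^{(i)}\|^2\,d\zeta^{*} = \int \|x-y\|^2\,d\zeta_i^{*}(x,y) \;\geq\; \mathcal{W}_2^2(\thetab_1^{(i)},\thetab_2^{(i)}).
\end{align*}
Summing over $i$ yields the claimed inequality.

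There is no substantive obstacle here; the argument is essentially a one-line consequence of the infimum definition of $\mathcal{W}_2$ combined with the additivity of $\|\cdot\|^2$ across coordinate blocks. The only point that warrants a brief verification is that the projected measure $\zeta_i^{*}$ really has the correct marginals $\thetab_1^{(i)}$ and $\thetab_2^{(i)}$, which is immediate from the definition of push-forward applied to the known full marginals $\Thetab_1,\Thetab_2$ of $\zeta^{*}$. Note that the inequality can be strict in general, because the optimal couplings at the marginal level need not be consistent with any single joint coupling.
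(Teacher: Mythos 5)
Your proof is correct, but it follows a genuinely different route from the paper's. You work entirely on the primal (coupling) side: take an optimal (or, if you prefer to avoid attainment issues, an $\epsilon$-optimal) coupling $\zeta^{*}$ of the joint laws of $\Thetab_1,\Thetab_2$, use the block additivity $\|\Thetab_1-\Thetab_2\|^2=\sum_{i}\|\thetab_1^{(i)}-\thetab_2^{(i)}\|^2$, and push $\zeta^{*}$ forward through each block projection to obtain admissible couplings of the marginals, which upper-bounds each $\mathcal{W}_2^2(\thetab_1^{(i)},\thetab_2^{(i)})$. The paper instead argues on the dual side, invoking the semi-dual representation $\mathcal{W}_2^2(\thetab_1^{(i)},\thetab_2^{(i)})=\mathbb{E}\|\thetab_1^{(i)}\|^2+\mathbb{E}\|\thetab_2^{(i)}\|^2+2\sup_{\phi\,\mathrm{convex}}\{-\mathbb{E}[\phi(\thetab_1^{(i)})]-\mathbb{E}[\phi^{*}(\thetab_2^{(i)})]\}$, taking optimal potentials $\phi_i$ for each block, and plugging the separable convex function $\Psi(\Thetab)=\sum_i\phi_i(\thetab^{(i)})$ (whose conjugate is $\sum_i\phi_i^{*}$) into the corresponding supremum for the joint distance, which lower-bounds $\mathcal{W}_2^2(\Thetab_1,\Thetab_2)$ by the sum of the block values. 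The two arguments are mirror images: you relax the marginal infima using a projected joint coupling, the paper relaxes the joint supremum using a separable dual potential. Your version is more elementary and self-contained (it needs only the coupling definition of $\mathcal{W}_2$ and push-forwards, not the convex-conjugate duality formula or attainment of optimal potentials), while the paper's buys a proof that sits naturally alongside the dual machinery it cites; your closing remark that the inequality can be strict, since blockwise-optimal couplings need not assemble into a single joint coupling, is also accurate.
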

\begin{proof}
	According to the Eq.(4.2) in \cite{Soheilal2017}, we can write the $\mathcal{W}_2(\thetab_{1}^{(i)},\thetab_{2}^{(i)})$ in the following optimizaition:
	\begin{align} \label{eq:w2}
	\mathcal{W}^2 _2(\thetab_{1}^{(i)},\thetab_{2}^{(i)})=\mathbb{E}\|\thetab_{1}^{(i)}\|^2+\mathbb{E}\|\thetab_{2}^{(i)}\|^2\nonumber\\
	+2\sup_{\phi:convex} \{-\mathbb{E}[\phi(\thetab_{1}^{(i)})]-\mathbb{E}[\phi^*(\thetab_{2}^{(i)})]\}
	\end{align}
	where $\phi^*(\theta) \triangleq \sup_{v}(v^{T}\theta-\phi( \theta))$ is the convex-conjugate of the function $\phi$.
	We assume $\phi_i$ is the optimal function of Eq.\ref{eq:w2}.
	Then it is trivial to verify that $\Psi(\Thetab)\triangleq \sum_{i=1}^{M}\phi_i(\thetab^{(i)})$ is a convex function. Due to the property of conjugate functions, we need to notice $\Psi(\Thetab)^*= \sum_{i=1}^{M}\phi^{*}_i(\thetab^{(i)})$.
	Now we can derive the following result:
	\begin{align*}
	\sum_{i=1}^M\mathcal{W}^2 _2(\thetab_{1}^{(i)},\thetab_{2}^{(i)})=&
	\sum_{i=1}^M \{\mathbb{E}\|\thetab_{1}^{(i)}\|^2+\mathbb{E}\|\thetab_{2}^{(i)}\|^2\nonumber\\
	&+2(-\mathbb{E}[\phi_i(\thetab_{1}^{(i)})]-\mathbb{E}[\phi^*_{i}(\thetab_{2}^{(i)})])\}  \\
	&= \mathbb{E}\|\Thetab_1\|^2+\mathbb{E}\|\Thetab_2\|^2\\
	&+2(-\mathbb{E}[\Psi(\Thetab_1)]-\mathbb{E}[\Psi^*(\Thetab_2)])\\
	&\leq \mathcal{W}^2 _2(\Thetab_1,\Thetab_2)
	\end{align*}
	Then we finish our proof.
\end{proof}

%
%

We should notice due to the exchangeability of the M-particles system $\{\theta^{(i)}_k\}$ in our SPOS-type sampling, the distribution of each particle $\theta^{(i)}_T$ at the same time is identical. Hence, using Proposition \ref{pro:pro1}, we can derive
\begin{align}
\mathcal{W}_2(\mu_{T},\nu_{\infty}) \leq \frac{1}{\sqrt{M}}\mathcal{W}_2(\Lambda_T,\Gamma_{\infty})
\end{align}\\

Now we will introduce a mild assumption that $\mathcal{W}_2(\mu_{T},\nu_{\infty}) \leq \frac{1}{{M}^{1/2+\alpha}}\mathcal{W}_2(\Lambda_T,\Gamma_{\infty})$. We wish to make some comments on the additional assumption. This assumption is reasonable. With this assumption, our theory can be verified by the experiment results, e.g. the improvement of SVRG-POS over SVRG-LD is much more significant than that of SAGA-POS over SAGA-LD, which imply the correctness and effectiveness of our assumption. Moreover, this assumption does not conflict with what you mentioned, since $\mathcal{W}_2(\mu_{T},\nu_{\infty}) \leq \frac{1}{{M}^{1/(2+\alpha)}}\mathcal{W}_2(\Lambda_T,\Gamma_{\infty}) \leq \frac{1}{\sqrt{M}}\mathcal{W}_2(\Lambda_T,\Gamma_{\infty})$. Furthermore, this assumption can be supported theoretically. Please consider the continuous function $\log_M \left( W_2(\Thetab_1,\Thetab_2) M/ \sum_{i=1}^M W_2(\thetab^{(i)}_1, \thetab^{(i)}_2) \right)-1/2$. We often care about bounded space in practice, which means we can find a positive minimum for that function in most cases. Since in practice we cannot use infinite particles, the required $\alpha$ does exist within the positive minima for every M mentioned above. Although we do not aim at giving an explicit expression for it, the existence is enough to explain the experiment results in our paper. Last, this assumption is supported in the algorithm itself. Please notice the fact that SPOS can be viewed as the combination of SVGD and SGLD. The SVGD part can let it satisfy some good properties which SGLD does not endow.\\

\textbf{Proof of Theorem \ref{thm:thm1}, Theorem \ref{thm:thm2} and Theorem \ref{thm:thm3}}
Applying the results for $\mathcal{W}_2(\Lambda_T,\Gamma_{\infty})$ in Theorem \ref{thm:The1}, Theorem \ref{thm:The2} and Theorem \ref{thm:The3}, we can get the corresponding results for $\mathcal{W}_2(\mu_{T},\nu_{\infty})$ in the SAGA-POS, SVRG-POS and SVRG-POS$^+$.
Then we can bound $\mathcal{W}_2(\mu_{T},\mu^*)$,which is what we desire, with the following fact
\begin{align}
\mathcal{W}_2(\mu_{T},\mu^*) \leq \mathcal{W}_2(\mu_{T},\nu_{\infty}) +\mathcal{W}_2(\nu_{\infty},\mu^*)
\end{align}
Note that from the proof of Theorem 3 and Remark 1 in  \cite{ZhangZC:18}, we can get that  
\begin{align}
\mathcal{W}_2(\nu_{\infty},\mu^*)\leq \frac{C_1}{\sqrt{M}}
\end{align}
Apply the results in Theorem \ref{thm:The1}, Theorem \ref{thm:The2} and Theorem \ref{thm:The3} above, we can prove the Theorem \ref{thm:thm1}, Theorem \ref{thm:thm2} and Theorem \ref{thm:thm3}.

\section{Extra theoretical discussion for SAGA-POS, SVRG-POS and SVRG-POS$^+$}
In this section, we discuss the mixing time and gradient complexity of our algorithms. The mixing time is the number of iterations needed to provably have error less than $\varepsilon$ measured in $\mathcal{W}_2$ distance \cite{Nilardri:2018}. The gradient complexity \cite{DIFAN:2018}, which is almost same as computational complexity in \cite{Nilardri:2018}, is defined as the required number of stochastic gradient evaluations to achieve a target accuracy $\varepsilon$.
We will present the mixing time and gradient complexity of several related algorithms in the following Table \ref{tab:mixt1}. And we focus on Option \uppercase\expandafter{\romannumeral1} of SVRG-POS here. This result for SVRG-LD$^+$ and SVRG-POS$^+$ may be a little different from that in \cite{DIFAN:2018} since we adopt different definitions for $F_j$.\\
\begin{table}[htbp]
	\vspace{-0.5cm}
	\centering
	\caption{Mixing Time and Gradient Complexity}
	\resizebox{1\linewidth}{30mm}{
		\begin{tabular}{ccc}
			\toprule[0.8pt]
			Algorithm & Mixing time  &  Gradient complexity \\ \hline
			SAGA-LD &      $\mathcal{O}(\frac{(L_F/m_F)^{3/2}\sqrt{d}}{B\varepsilon})$          &   $\mathcal{O}(N+\frac{(L_F/m_F)^{3/2}\sqrt{d}}{\varepsilon})$                   \\ 
			\\
			SAGA-POS &     $\mathcal{O}(\frac{(C_2/C_3)^{3/2}\sqrt{d}}{BM^{\alpha}\varepsilon })$  &  $\mathcal{O}(NM+\frac{(C_2/C_3)^{3/2}\sqrt{d}M^{1-\alpha}}{ \varepsilon })$                    \\
			\\
			SVRG-LD  &       $\mathcal{O}(\frac{(L_F/m_F)^{3}d}{ B \varepsilon^2 })$         &$\mathcal{O}(N+\frac{(L_F/m_F)^{3}\sqrt{d}}{ \varepsilon^2})$                     \\
			\\
			SVRG-POS &        $\mathcal{O}(\frac{(C_2/C_3)^{3}d}{ BM^{2\alpha} \varepsilon^2 })$      &$\mathcal{O}(NM+\frac{(C_2/C_3) ^{3}\sqrt{d}M^{1-2\alpha}}{ \varepsilon^2})$                     \\
			\\
			SVRG-LD$^+$ &     $\mathcal{O}(\frac{\sigma^2 d}{ {m_F}^2 \varepsilon^2 })$      & $\mathcal{O}(\frac{\sigma^2 d}{ {m_F}^2 \varepsilon^2 }\wedge (N+\frac{(L_F/m_F)^{3/2}\sqrt{d}}{\varepsilon}))$                    \\
			\\
			SVRG-POS$^+$ &     $\mathcal{O}(\frac{C_4^2 d}{ M^{2\alpha}{C_3}^2 \varepsilon^2 })$      & $\mathcal{O}(\frac{C_4^2 d}{ {C_3}^2 \varepsilon^2 }\wedge (NM+\frac{(C_2/C_3)^{3/2}\sqrt{d}M^{1-2\alpha}}{\varepsilon}))$                     \\
			\bottomrule[0.8pt]
		\end{tabular}
	}
	\label{tab:mixt1}
\end{table}

It is worth noting that the results for SVRG-POS$^+$ is derived by adopting that $B=1$ and $b= \mathcal{O}({d\sigma^2/ \mu^2 {\varepsilon}^2 })$ from \cite{DIFAN:2018}, which also sheds a light on the optimal choice of b and B in our SVRG-POS$^+$. For fair comparisons with our algorithms, we consider variance-reduced versions of SGLD with M independent chains. Hence, the gradient complexities of the SAGA-LD, SVRG-LD and SVRG-LD$^+$ need to times M respectively, which is consistent with the discussion in Section 3 and our experiment results. As for the case there is only one chain, we can find the  gradient complexities of SVRG-LD and SVRG-POS are almost the same and the gradient complexities of SVRG-LD$^+$ and SVRG-POS$^+$ are almost the same. You can choose  SVRG-POS and SVRG-POS$^+$ for practical use to improve your results. However, we need to note that in practice, we do need more chains to  derive samples which are more convincible. So this also provide another reason for us to use more chains to compare the results. \\
Since the convergence guarantee in Theorem \ref{thm:thm1}, \ref{thm:thm2} and \ref{thm:thm3} for our  is developed with respect to both iteration $T$ and the number $M$, so we define the "threshold-particle", which means the number of particles needed to provably have error less than $\varepsilon$ measured in $\mathcal{W}_2$ distance. We will present the "threshold-particle" for our algorithms.

\begin{table}[htbp]
	\vspace{-0.5cm}
	\centering
	\caption{Threshold-particle}
	\begin{tabular}{cc}
		\toprule[0.8pt]
		Algorithm & Threshold-particle \\ \hline
		SAGA-POS &     $C_1^2/\varepsilon^2$ 
		\\
		SVRG-POS &        $C_1^2/\varepsilon^2$     
		\\
		SVRG-POS$^+$ &     $C_1^2/\varepsilon^2$ 
		\\      
		\bottomrule[0.8pt]
	\end{tabular}
	\label{tab:mixt2}
\end{table}
Actually, we should note that the M in the mixing time from Table \ref{tab:mixt1} also should satisfy the result that $M \geq C_1^2 / \varepsilon^2$. In practice, since $C_1=\frac{H_{\nabla K}+H_F}{\sqrt{2}(\beta^{-1}-3H_FL_K-2L_F)}$, we set $\beta$ a little small to avoid the threshold-particle to be too large. However, in our experiment since $H_F$,$L_K$ and $L_F$ is not large, so we do not need to worry about this issues.\\
The last thing we want to give some explanations is that one may notice in the SAGA-POS algorithm we may store even more things like $G_k^{(i)}$. Since in Algorithm 1, we need to store them each iteration. But the $\{G_k^{(i)}\}_{i=1}^M$ only scales as $\mathcal{O}(Md)$. Take the dataset $Susy$, which we used in the experiments, as an example, we have $N=10000$ and $d=18$. However, we only use $M\leq 40$ particles. Hence, we do not take the above things such as $\{G_k^{(i)}\}_{i=1}^M$ into consideration.

\section{Comparison between SPOS and its variance-reduction counterpart}
In \cite{ZhangZC:18}, they use the distance $\tilde{\mathcal{B}}_T$ defined as  $\tilde{\mathcal{B}}_T \triangleq \sup \left|\mathbb{E}_ {\mu_{T}}[f(\thetab)] - \mathbb{E}_{\mu^*}[f(\thetab)]\right|$. When $\|f\|_{lip}\leq 1$, $\tilde{\mathcal{B}}_T$ is another definition of $\mathcal{W}_1(\mu_{T},\mu^*)$.Actually, according the proof in \cite{ZhangZC:18}, they did give a bound in terms of $\mathcal{W}_1(\mu_{T},\mu^*)$. Then according to the results in \cite{ZhangZC:18}, we can get the following theorem,
\begin{theorem}[Fixed Stepsize]\label{theo:fixed}
	Under Assume~\ref{ass:ass1}, there exit some positive constants $(c_1, c_2, c_3, c_4,c_5)$ such that the bound for $\mathcal{W}_1(\mu_T,\mu^*)$ in the SPOS algorithm satisfies:
	\begin{align*}
	&\mathcal{W}_1(\mu_{T},\mu^*)\leq \frac{c_1}{\sqrt{M}(\beta^{-1} - c_2)}\\
	& + W_1(\mu_{0}, \mu_{\infty})\exp\left\{-2\left(\beta^{-1}m_F - 3H_FL_K - 2L_F\right)Th\right\} \\
	&+ c_3Md^{\frac{3}{2}}\beta^{-3}(c_4\beta^2B^{-1}+c_5h)^{\frac{1}{2}} T^{\frac{1}{2}}h^{\frac{1}{2}}~.
	\end{align*}
\end{theorem}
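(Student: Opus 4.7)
The strategy is to split $\mathcal{W}_1(\mu_T,\mu^*)$ via a triangle inequality
\begin{align*}
\mathcal{W}_1(\mu_T,\mu^*)\le \mathcal{W}_1(\mu_T,\nu_T)+\mathcal{W}_1(\nu_T,\nu_\infty)+\mathcal{W}_1(\nu_\infty,\mu^*),
\end{align*}
where $\nu_t$ denotes the single-particle marginal of the continuous-time interacting system \eqref{eq:particle} with $M$ particles and $\nu_\infty$ its stationary distribution. Each summand is meant to produce exactly one term in the stated bound: the $1/\sqrt{M}$ mean-field gap, the exponentially contracting piece multiplying $\mathcal{W}_1(\mu_0,\mu_\infty)$, and the $\sqrt{Th(c_4\beta^2/B+c_5 h)}$ discretization/minibatch error, in that order.

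For $\mathcal{W}_1(\nu_\infty,\mu^*)$ I would invoke the propagation-of-chaos estimate already exploited in \cite{ZhangZC:18}: as $M\to\infty$ the single-particle marginal converges to a McKean--Vlasov limit whose invariant measure solves \eqref{accept} and equals $\mu^*$, and boundedness of $F$, $K$ and $\nabla K$ yields the $1/\sqrt{M}$ rate with numerator $c_1$ absorbing $H_F+H_{\nabla K}$ and denominator $\beta^{-1}-c_2$ (with $c_2$ grouping $3H_F L_K+2L_F$) reflecting the contractivity margin of the lifted drift $F^{\Thetab}$ from Theorem~\ref{the:them4}. For $\mathcal{W}_1(\nu_T,\nu_\infty)$ I would lift to $\Thetab_t\in\mathbb{R}^{Md}$ via \eqref{eq:extendP} and run two copies of the lifted SDE with the same Brownian motion but different initial conditions; an It\^o differentiation of the squared distance, together with the strong-convexity constant $(\beta^{-1}m_F-2L_F-3H_F L_K)$ proved in Theorem~\ref{the:them4}, yields exponential contraction of $\mathcal{W}_2(\Gamma_t,\Gamma_\infty)$. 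Descending to the single-particle marginal by exchangeability (the same device as in Proposition~\ref{pro:pro1}) and then using $\mathcal{W}_1\le\mathcal{W}_2$ supplies the stated exponential factor in front of $\mathcal{W}_1(\mu_0,\mu_\infty)$.

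The remaining summand $\mathcal{W}_1(\mu_T,\nu_T)$ is the discretization plus stochastic-gradient error. The plan is to couple one step of the SPOS update \eqref{eq:particle_num} to the continuous SDE run for time $h$, driven by the same Brownian increment. A single step contributes (i) an $\mathcal{O}(h)$ Euler bias from the Lipschitz drift and (ii) a minibatch variance of order $h^2 N^2\sigma^2/B$ arising from the fluctuation $G_k^{(i)}-F(\thetab_k^{(i)})$, the $N^2/B$ scaling coming from the standard unbiased-subsampling identity. Squaring, summing over the $T$ steps and pushing the resulting bound in $Md$-dimensional lifted space down to the single-particle marginal produces the $\sqrt{Th(c_4\beta^2/B+c_5 h)}$ factor; the prefactor $Md^{3/2}\beta^{-3}$ arises from the dimensional dependence of the Gaussian moments of the $Md$-dimensional Brownian increments together with the $\beta^{-1}$-scaling of both drift and noise amplitude in \eqref{eq:particle}.

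The main obstacle is the contraction step: one must show that the interaction term $\tfrac1M\sum_q K(\thetab^{(i)}-\thetab^{(q)})F(\thetab^{(q)})$, which is neither a gradient field nor monotone in $\Thetab$, does not spoil the strong convexity inherited from $\beta^{-1}F$. This is precisely the non-routine bookkeeping carried out in Theorem~\ref{the:them4}, where the loss of monotonicity is absorbed into the $-2L_F-3H_F L_K$ correction; the sign of the resulting strong-convexity constant simultaneously pins down the admissible range $\beta^{-1}>c_2$ and the exponential-decay rate that shows up in the middle term of the bound. Once this is secured, the mean-field and Euler-step ingredients are essentially standard and the three pieces can be assembled into the stated estimate.
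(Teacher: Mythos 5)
Your plan is mathematically reasonable, but it is not how the paper establishes this statement: the paper gives no fresh derivation at all. It observes that the quantity $\tilde{\mathcal{B}}_T\triangleq\sup\left|\mathbb{E}_{\mu_T}[f(\thetab)]-\mathbb{E}_{\mu^*}[f(\thetab)]\right|$ controlled in \cite{ZhangZC:18} coincides with $\mathcal{W}_1(\mu_T,\mu^*)$ when the supremum is over $1$-Lipschitz $f$ (Kantorovich--Rubinstein duality), and then imports the fixed-stepsize bound of \cite{ZhangZC:18} verbatim; the theorem appears here only so that SPOS can be compared against its variance-reduced variants. Your route --- the triangle inequality into a mean-field gap $\mathcal{W}_1(\nu_\infty,\mu^*)=\mathcal{O}(1/\sqrt{M})$, exponential contraction of the lifted SDE via the strong-convexity constant of Theorem~\ref{the:them4} followed by descent to the single-particle marginal as in Proposition~\ref{pro:pro1}, and an Euler-plus-minibatch error accumulated over $T$ steps --- is essentially a reconstruction of the argument inside \cite{ZhangZC:18} (and parallels the paper's own appendix treatment of SAGA-POS and SVRG-POS), so it buys self-containedness at the price of redoing that work. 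Two details you would still need to pin down to recover the stated constants: the exponent $-2\left(\beta^{-1}m_F-3H_FL_K-2L_F\right)Th$ carries a factor $2$ that a $\mathcal{W}_2$-contraction argument followed by $\mathcal{W}_1\le\mathcal{W}_2$ does not automatically produce (it yields decay rate $C_3$, not $2C_3$), and the minibatch term $c_4\beta^2B^{-1}$ together with the prefactor $Md^{3/2}\beta^{-3}$ is obtained in \cite{ZhangZC:18} from bounded-gradient bookkeeping under Assumption~\ref{ass:ass1} alone, whereas your ``standard unbiased-subsampling identity'' implicitly invokes a variance bound of the type in Assumption~\ref{ass:ass3}, which this theorem does not assume. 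Since $c_1,\dots,c_5$ are left unspecified these are repairable, but they are exactly where your sketch would require real work rather than the citation the paper uses.
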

Firstly, we should notice that the third term $c_3Md^{\frac{3}{2}}\beta^{-3}(c_4\beta^2B^{-1}+c_5h)^{\frac{1}{2}} T^{\frac{1}{2}}h^{\frac{1}{2}}$ on the right side increases with $T$ and $M$. However, the bound for SAGA-POS, SVRG-POS and SVRG-POS$^+$ in our paper decrease with with $T$ and $M$, which means that the bound for SAGA-POS, SVRG-POS and SVRG-POS$^+$ is much tighter than the bound for SPOS. Furthermore, the convergence of  SPOS is characterized in  $\mathcal{W}_1(\mu_T,\mu^*)$ but the convergence of SAGA-POS, SVRG-POS and SVRG-POS$^+$ are characterized by $\mathcal{W}_2(\mu_{T},\mu^*)$. Due to the well-known fact that $\mathcal{W}_1(\mu_T,\mu^*) \leq \mathcal{W}_1(\mu_T,\mu^*)$, we can verify that SAGA-POS, SVRG-POS and SVRG-POS$^+$ can outperform SPOS in the theoretical perspective. Although the result for SPOS in \cite{ZhangZC:18} may be improved in the future, but we believe that there is no doubt that SAGA-POS, SVRG-POS and SVRG-POS$^+$ are better than it in performance, which has been verified in experiments in our paper.

\section{More experiments results}

We further examine the impact of number of particles to the convergence rates of variance-reduced SGLD and SPOS. As indicated by Theorems~\ref{thm:thm1}-\ref{thm:thm3} (discussed in Remark~\ref{remark1} and \ref{remark2}), when the number of particles are large enough, the convergence rates of SAGA-POS and SVRG-POS would both outperform their SGLD counterparts. In addition, the performance gap would increase with increasing $M$, as indicated in Remark~\ref{remark4}. We conduct experiments on the $Australian$ dataset by varying its particle numbers among $\{1, 8, 16, 32\}$. The results are plotted in Figure~\ref{fig:part}, which are roughly aligned with our theory.
\begin{figure}
	\centering
	\subfigure[1 particle]{\includegraphics[width=0.9\linewidth]{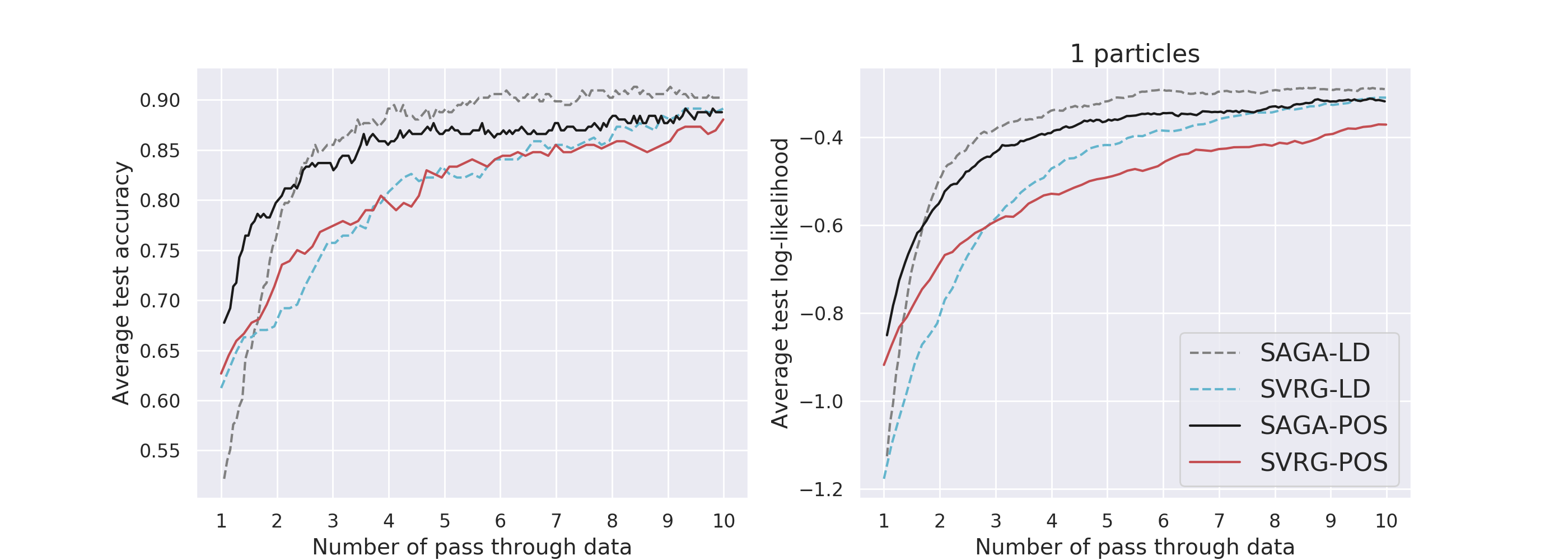}}\\
	\vspace{-0.4cm}
	\subfigure[8 particles]{\includegraphics[width=0.9\linewidth]{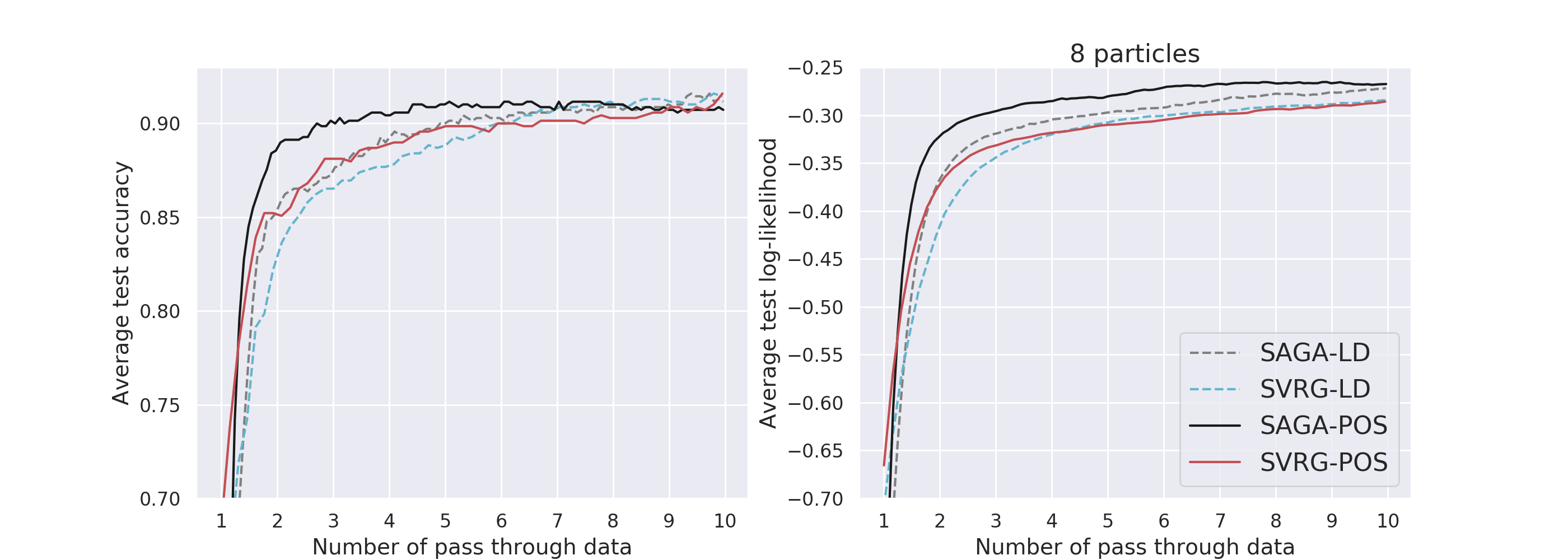}}
	\vspace{-0.1cm}
	\subfigure[16 particles]{\includegraphics[width=0.9\linewidth]{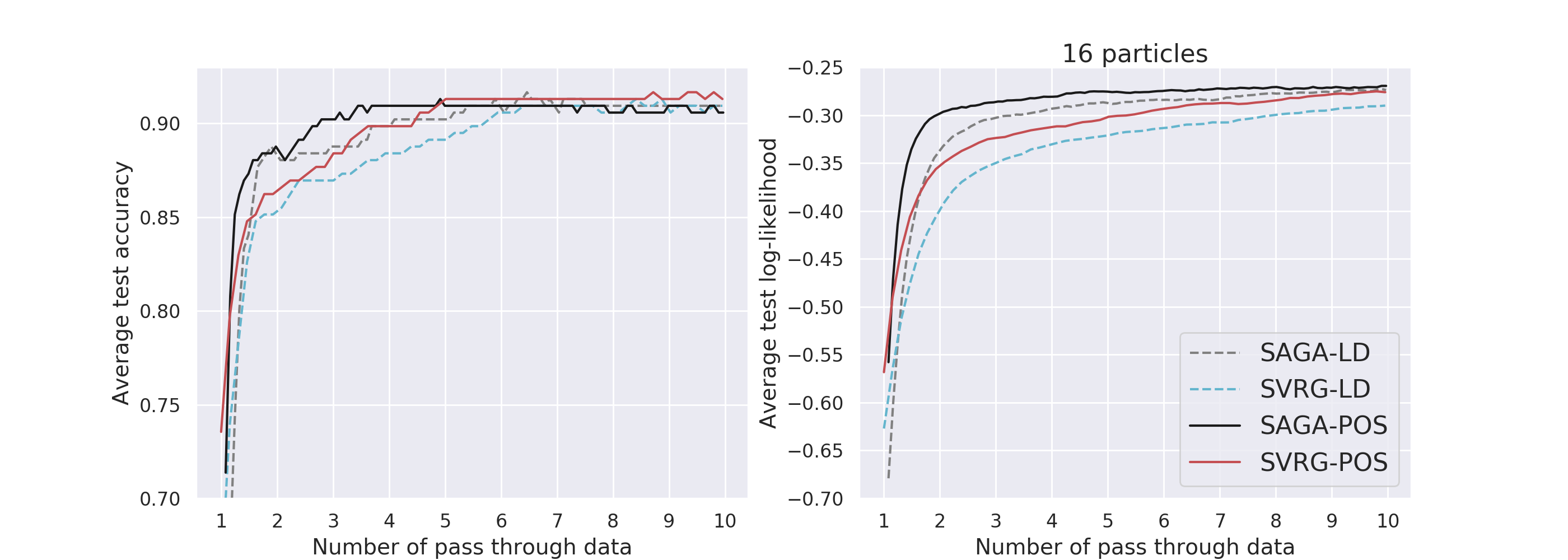}}\\
	\vspace{-0.4cm}
	\subfigure[32 particles]{\includegraphics[width=0.9\linewidth]{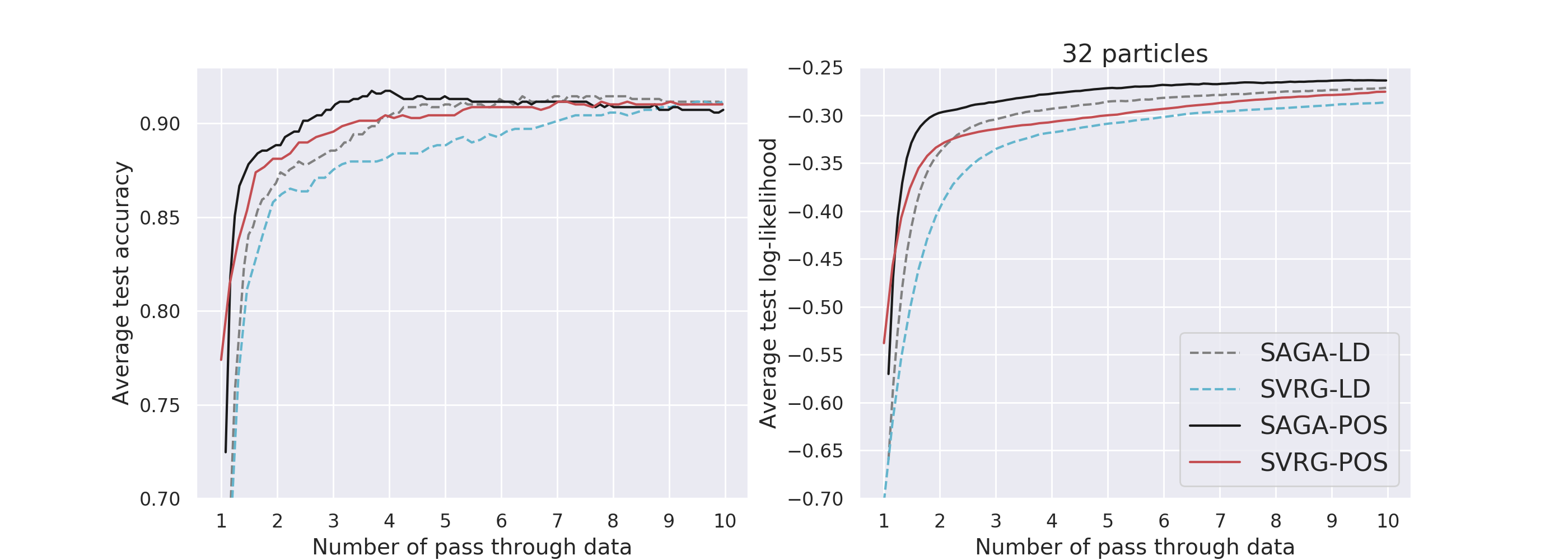}}		
	\vspace{-0.3cm}
	\caption{Testing accuracy and log-likelihood vs the number of data pass for SPOS with varying number of particles.}
	\label{fig:part}
\end{figure}

\end{document}